\newcommand{\bl}[1]{\textcolor{blue}{#1}}
\definecolor{mypurple}{rgb}{.4,.0,.5}
\def\y{{\bf y}}
\def\x{{\bf x}}
\def\x{{\mathbf x}}
\def\u{{\bf u}}
\def\x{{\bf x}}
\def\y{{\bf y}}
\def\z{{\bf z}}
\def\q{{\bf q}}
\def\m{{\bf m}}
\def\b{{\bf b}}
\def\c{{\bf c}}
\def\h{{\bf h}}
\def\cH{{\mathcal H}}
\def\be{\begin{equation}}
\def\ee{\end{equation}}
\def\ba{\left[\begin{array}}
\def\ea{\end{array}\right]}
\def\u{{\bf u}}
\def\x{{\bf x}}
\def\y{{\bf y}}
\def\z{{\bf z}}
\def\q{{\bf q}}
\def\b{{\bf b}}
\def\c{{\bf c}}
\def\p{{\bf p}}
\def\1{{\bf 1}}
\def\0{{\bf 0}}
\def\erfc{\mbox{erfc}}
\def\mR{{\mathbb R}}
\def\mN{{\mathbb N}}
\def\mE{{\mathbb E}}
\def\mB{{\mathbb B}}
\def\mS{{\mathbb S}}
\def\mP{{\mathbb P}}
\def\lp{\left (}
\def\rp{\right )}
\def\y{{\bf y}}
\def\x{{\bf x}}
\def\x{{\mathbf x}}
\def\u{{\bf u}}
\def\x{{\bf x}}
\def\y{{\bf y}}
\def\z{{\bf z}}
\def\q{{\bf q}}
\def\b{{\bf b}}
\def\c{{\bf c}}
\def\h{{\bf h}}
\def\cH{{\cal H}}
\def\be{\begin{equation}}
\def\ee{\end{equation}}
\def\ba{\left[\begin{array}}
\def\ea{\end{array}\right]}
\def\u{{\bf u}}
\def\x{{\bf x}}
\def\y{{\bf y}}
\def\z{{\bf z}}
\def\q{{\bf q}}
\def\b{{\bf b}}
\def\c{{\bf c}}
\def\p{{\bf p}}
\def\({\left (}
\def\){\right )}
\def\1{{\bf 1}}
\def\m{{\bf m}}
\def\q{{\bf q}}
\def\0{{\bf 0}}
\def\cX{{\mathcal X}}
\def\cY{{\mathcal Y}}
\definecolor{darkgreen}{rgb}{0, 0.4,0}
\definecolor{purplebrown}{rgb}{0.5,0.1,0.6}
\definecolor{ultclupcol}{rgb}{0.1,0.5,0.5}
\definecolor{mytrycolor}{rgb}{0.5,0.7,0.2}
\definecolor{ultclupcola}{rgb}{.5,0,.5}
\definecolor{shadebrown}{rgb}{0.1,0.1,0.9}
\definecolor{lightblue}{rgb}{0.2,0,1}
\newtcbox{\xmybox}{on line,
arc=7pt,
before upper={\rule[-3pt]{0pt}{10pt}},boxrule=0pt,
boxsep=0pt,left=6pt,right=6pt,top=0pt,bottom=0pt,enhanced, coltext=blue, colback=white!10!yellow}
\newtcbox{\xmyboxa}{on line,
arc=7pt,
before upper={\rule[-3pt]{0pt}{10pt}},boxrule=0pt,
boxsep=0pt,left=6pt,right=6pt,top=0pt,bottom=0pt,enhanced, colback=white!10!yellow}
\newtcbox{\xmyboxb}{on line,
arc=7pt,
before upper={\rule[-3pt]{0pt}{10pt}},boxrule=1pt,colframe=darkgreen!100!blue,
boxsep=0pt,left=6pt,right=6pt,top=0pt,bottom=0pt,enhanced, colback=white!10!yellow}
\newtcbox{\xmyboxc}{on line,
arc=7pt,
before upper={\rule[-3pt]{0pt}{10pt}},boxrule=.7pt,colframe=blue!100!blue,
boxsep=0pt,left=6pt,right=6pt,top=0pt,bottom=0pt,enhanced, coltext=blue, colback=white!10!yellow}
\newtcbox{\xmytboxa}{on line,
arc=7pt,
before upper={\rule[-3pt]{0pt}{10pt}},boxrule=.0pt,colframe=pink!50!yellow,
boxsep=0pt,left=6pt,right=6pt,top=0pt,bottom=0pt,enhanced, coltext=white, colback=blue!40!red}
\newtcbox{\xmytboxb}{on line,
arc=7pt,
before upper={\rule[-3pt]{0pt}{10pt}},boxrule=.0pt,colframe=pink!50!yellow,
boxsep=0pt,left=6pt,right=6pt,top=0pt,bottom=0pt,enhanced, coltext=white, colback=white!40!green}
\newcommand\subsubsubsection{\@startsection{paragraph}{4}{\z@}{-2.5ex\@plus -1ex \@minus -.25ex}{1.25ex \@plus .25ex}{\normalfont\normalsize\bfseries}}
\newcommand\subsubsubsubsection{\@startsection{subparagraph}{5}{\z@}{-2.5ex\@plus -1ex \@minus -.25ex}{1.25ex \@plus .25ex}{\normalfont\normalsize\bfseries}}
\newtheorem{theorem}{Theorem}
\begin{document}

\begin{singlespace}

\title {Rare dense solutions clusters in asymmetric binary perceptrons -- local entropy via fully lifted RDT 
}
\author{
\textsc{Mihailo Stojnic
\footnote{e-mail: {\tt flatoyer@gmail.com}} }}
\date{}
\maketitle

\centerline{{\bf Abstract}} \vspace*{0.1in}

We study classical asymmetric binary perceptron (ABP) and associated \emph{local entropy} (LE) as potential source of its algorithmic hardness. Isolation of \emph{typical} ABP solutions in SAT phase seemingly suggests a universal algorithmic hardness. Paradoxically, efficient algorithms do exist even for constraint densities $\alpha$ fairly close but at a finite distance (\emph{computational gap}) from the capacity. In recent years, existence of rare large dense clusters and magical ability of fast algorithms to find them have been posited as the conceptual resolution of this paradox. Monotonicity or breakdown of the LEs associated with such \emph{atypical} clusters are predicated to play a key role in their thinning-out  or even complete defragmentation.

Invention of fully lifted random duality theory (fl RDT) \cite{Stojnicnflgscompyx23,Stojnicsflgscompyx23,Stojnicflrdt23} allows studying random structures \emph{typical} features. A large deviation upgrade, sfl LD RDT  \cite{Stojnicnflldp25,Stojnicsflldp25},  moves things further and enables \emph{atypical} features characterizations as well. Utilizing the machinery of \cite{Stojnicnflldp25,Stojnicsflldp25} we here develop a generic framework to study LE as an ABP's atypical feature. Already on the second level of lifting we discover that the LE results are closely matching those obtained through replica methods. For classical zero threshold ABP, we obtain that LE breaks down for $\alpha$ in $(0.77,0.78)$ interval which basically matches $\alpha\sim 0.75-0.77$ range that currently best ABP solvers can handle and effectively indicates that LE's behavior  might indeed be among key reflections of the ABP's computational gaps presumable existence.

\vspace*{0.25in} \noindent {\bf Index Terms: Binary perceptrons; Local entropy; Random duality theory; Large deviations}.

\end{singlespace}

\section{Introduction}
\label{sec:back}

We consider  \emph{perceptrons} -- the most fundamental neural networks (NN) units and unavoidable machine learning (ML) classifying tools.  As basically irreplaceable AI concepts they have been extensively studied over the last 70 years. Such studies include both looking at them as foundational parts of more complex AI architectures or as simple convenient individual models presumably indicative of more generic machine learning phenomenologies. Two classical perceptron types, \emph{spherical} and \emph{binary},  distinguished themselves from the early machine learning days and it is by no means an overstretching  to say that determining spherical perceptron's (zero threshold) capacity (critical data constraints density $\alpha_c$ below which classifying is successful)  is among the monumental AI breakthroughs \cite{Wendel62,Winder,Cover65}. In particular, as one of the very first  analytical confirmations of AI's practical potential, it crucially helped understand importance of deep analytical foundations. Moreover, continuation of such a tendency through the ensuing decades effectively ensured that modern AI is almost unimaginable without a strong theoretical support.

After initial success of  \cite{Wendel62,Winder,Cover65},  a large body of highly-sophisticated  followup work
\cite{Gar88,GarDer88,SchTir02,SchTir03,Tal05,Talbook11a,Talbook11b,StojnicGardGen13,StojnicGardSphNeg13,StojnicGardSphErr13} significantly  extended understanding of perceptrons often allowing shifting focus on features way beyond the classical capacities. The class of \emph{positive} spherical perceptrons (PSP) (of which the above mentioned zero-threshold one is a special case) became particularly well  understood. Strong deterministic duality and convexity as a PSP's distinguished underlying features were recognized as main catalyst for success of analytical studies \cite{SchTir02,SchTir03,StojnicGardGen13}. At the same time, lack of these features was observed as often analytically unsurpassable obstacle. This is clearly exemplified by the negative spherical perceptron (NSP) where a seemingly tiny change from a positive to a negative threshold makes things much harder \cite{StojnicGardSphNeg13,FPSUZ17,FraHwaUrb19,FraPar16,FraSclUrb19,FraSclUrb20,AlaSel20,BaldMPZ23,BMPZ23} and (compared to  \cite{SchTir02,SchTir03,StojnicGardGen13,Wendel62,Winder,Cover65}) substantially more involved  approaches are needed   \cite{Stojnicsflgscompyx23,Stojnicnflgscompyx23,Stojnicflrdt23,Stojnicnegsphflrdt23}.

The asymmetric binary perceptrons (ABP) that we study here are somewhat similar to NSPs as underlying strong deterministic duality is lacking. For example, simple replica symmetric predictions do not hold \cite{Gar88,GarDer88,StojnicDiscPercp13} and instead, more involved,  replica symmetry breaking ones from \cite{KraMez89} are needed to obtain accurate capacity characterizations \cite{DingSun19,NakSun23,BoltNakSunXu22,Huang24,Stojnicbinperflrdt23} (somewhere in between the PSP and the ABP are the symmetric binary perceptrons (SBP) where replica symmetric predictions also do not hold but a convenient combinatorial nature of the problem still allows for simple precise characterizations \cite{AbbLiSly21b,PerkXu21,AbbLiSly21a,AubPerZde19,GamKizPerXu22}).

\section{ABP algorithmic hardness}
\label{sec:examples}

All of the above relates to perceptrons' theoretical limits. The first next question is if these limits are practically attainable.  Classical complexity theory positions ABP's algorithmic solving as an NP problem. Due to its  \emph{worst case} nature, NP-ness rarely gives a proper explanation regarding typical solvability.  Practically speaking, things are actually even worse as many excellent algorithms that perform very well in a large part of $\alpha< \alpha_c$ interval actually exist \cite{BrZech06,BaldassiBBZ07,Hubara16,KimRoc98}. To put everything in concrete terms, the capacity of zero-threshold ABP is $\alpha_c\approx 0.833$. Known fast algorithms can solve ABP up to  $\alpha_{alg}\approx 0.75$ (sometimes even up to $0.77$). Determining precisely $\alpha_{alg}$ -- the critical constraint density  below which the efficient algorithms exist -- is an extraordinary challenge. Characterizing ABP's underlying structural phenomenology that actually allows for the existence of such algorithms is likely to be an equally challenging associated task. Since the current $\alpha_{alg}$ is below $\alpha_c$ one presumes the existence of the so-called \emph{computational gap} (C-gap) -- a key algorithmic feature already associated with a host of other (random) optimization/satisfiability problems \cite{MMZ05,GamarSud14,GamarSud17,GamarSud17a,AchlioptasR06,AchlioptasCR11,GamMZ22}.

\subsection{ABP C-gap demystification  -- related prior work}
\label{sec:examples}

Over the last 20 years, a strong effort has been put forth to improve understanding of the above concepts. Whether or not the C-gap indeed exists and determining its precise value if it does exist are among key open questions. Similar questions have been studied in classes of optimization problems extending way beyond the ABP. While many excellent results have been obtained, any form of a generic resolution and a possible overall C-gap demystification remain out of reach. To give a bit of a flavor as to what kind of approaches are possible, we single out possibly the most attractive among them that focuses on studying the nature of clustered solutions. Two clustering related properties have received particular attention in recent literature: (i) \emph{Overlap gap property} (OGP) (see, e.g., \cite{Gamar21,GamarSud14,GamarSud17,GamarSud17a,AchlioptasCR11,HMMZ08,MMZ05}); and (ii) \emph{Local entropy} (LE) (see, e.g., \cite{Bald15,Bald16,Bald20}).

The OGP approach \cite{Gamar21,GamarSud14,GamarSud17,GamarSud17a,AchlioptasCR11,HMMZ08,MMZ05} attributes algorithmic efficacy to a lack of gaps in the spectrum of attainable  solutions pairs (or generally $m$-tuples) Hamming distances. For analytically simpler but structurally presumably similar SBP alternative, it is known  \cite{GamKizPerXu22,Bald20} that the OGP's presence extends well below $\alpha_c$ (see also \cite{GamKizPerXu23} for similar discrepancy minimization related conclusions). Assuming that OGP indeed impacts algorithmic tractability, this provides a rather strong indication of C-gap existence. On the other hand, a recent shortest path counterexample \cite{LiSch24} disproves OGP generic hardness implication (earlier disproving examples included $k$-XOR ones and, due to their algebraic simplicity, have been taken as exceptions). One should however note that  \cite{LiSch24} does not disprove OGP's relevance for particular algorithms or  different optimization problems. For example, long believed OGP absence in the famous (binary) Sherrington-Kirkpatrick (SK) model \cite{SheKir72}  directly implies their solvability by polynomial algorithms \cite{Montanari19} (related p-spin SK extensions are obtained as well \cite{AlaouiMS22,AlaouiMS21}; for earlier spherical spin-glass models related considerations see \cite{Subag17,Subag17a,Subag21,Subag24} and  for importance of more sophisticated OGPs that go beyond the standard ones related to solution pairs see, e.g., \cite{Kiz23,HuangS22}). While the OGP role in generic algorithmic hardness remains undetermined, its presence certainly precludes existence of efficient implementations for many specific  algorithmic classes \cite{GamKizPerXu22}. Moreover, existence of practical algorithms in $\alpha$ ranges where OGP is absent additionally strengthens its relevance for many well known problems \cite{RahVir17,GamarSud14,GamarSud17,GamAW24,Wein22}.

Alternatively to the above OGP view, \cite{Huang13,Huang14}  considered \emph{typical} solutions entropy as a way of describing clustering organization and its relevance for algorithmic hardness. A complete so-called \emph{frozen} isolation of typical solutions is predicated (and proven in \cite{PerkXu21,AbbLiSly21a,AbbLiSly21b} for SBP). \cite{Bald15,Bald16,Bald20} went further and considered a stronger entropic refinement. They looked at \emph{atypical} well connected clusters and proposed studying associated \emph{local entropy} (LE). The main idea is that even though predominant typical solutions might be disconnected from each other and unreachable via local searches \cite{Huang13,Huang14,PerkXu21,AbbLiSly21b}, there may still exist rare well-connected clusters. It is then predicted that efficient algorithms find precisely such rare clusters (see, e.g., \cite{ElAlGam24} for an SBP's sampling type of justification along these lines). If such a pictorial portrayal is indeed true, then C-gap existence is likely directly related to the properties of rare clusters. Behavior of the associated local entropy (monotonicity, breakdown, negativity, etc.) is further speculated in \cite{Bald15,Bald16,Bald20} as a key reflection of rare clusters' structures and its impact on algorithmic hardness. In support of such a phenomenology, \cite{AbbLiSly21a} showed for SBP the existence of maximal diameter clusters for sufficiently small $\alpha$. Moreover, \cite{AbbLiSly21a} also showed that similar clusters, albeit of linear diameter, exist even for any $\alpha<\alpha_c$ (modulo tiny technical assumptions, \cite{AbbLiSly21a}'s SBP results extend to ABP as well). Reconnecting back with the OGP, \cite{BarbAKZ23} showed that small $\alpha$ SBP LE results scaling-wise fairly closely match the \cite{GamKizPerXu22}'s OGP predictions  (and modulo a log term the \cite{BanSpen20}'s algorithmic performance). While such a nice OGP -- LE correspondence is rather convincing, establishing a definite answer as to whether or not these properties indeed crucially impact appearance of C-gaps remains a mystery. At the same time, it should be pointed out that no matter what their C-gap relevance is, these properties certainly  provide deep insights into the intrinsic nature of random structures and understanding them is of independent interest as well.

\subsection{ABP C-gap demystification  -- our contributions}
\label{sec:contrib}

More foundational nature of the above concepts makes them analytically much harder when compared to studying ABP's capacity alone. As expected, the source of hardness is related to the move from studying \emph{typical} to studying \emph{atypical} random structures features. Such a leap is rather gigantic and, with a few exceptions, the analytical progress is often limited to statistical mechanics replica methods.

For studying \emph{typical} features, the power of the mechanisms introduced in \cite{Stojnicnflgscompyx23,Stojnicsflgscompyx23,Stojnicflrdt23} suffices. Clearly, the most prominent ABP typical feature is its critical capacity $\alpha_c$  and the machinery of \cite{Stojnicnflgscompyx23,Stojnicsflgscompyx23,Stojnicflrdt23} can be used to determine it\cite{Stojnicbinperflrdt23}  ($\alpha_c$ is the critical constraint density value for which ABP associative memory functioning transitions from  successful to unsuccessful phase). The same mechanisms can also be used to characterize various other ABP features associated with \emph{typical} behavior. For example, in regimes below the capacity ($\alpha< \alpha_c$), a large number of hierarchically ordered solutions (ABP memories) will be present. The \emph{typical} solutions interrelations and entropies   can be precisely determined via \cite{Stojnicnflgscompyx23,Stojnicsflgscompyx23,Stojnicflrdt23}. On the other hand, as we will see below, handling \emph{atypical} features finds an fl RDT upgrade rather useful.

Following into the footsteps of \cite{Stojnicbinperflrdt23} we here move the \emph{typical} connection between the statistical ABP and fully lifted random duality theory (fl RDT) \cite{StojnicCSetam09,StojnicICASSP10var,StojnicRegRndDlt10,StojnicGardGen13,StojnicICASSP09}  to a large deviation \emph{atypical} level. Leveraging the fl RDT's large deviation sfl LD RDT upgrade from \cite{Stojnicnflldp25,Stojnicsflldp25}, we here develop a generic framework for studying ABP's LE. Similarly to fl RDT, to have sfl LD RDT fully operational a series of underlying numerical evaluations is needed. After conducting those for zero-threshold ($\kappa=0$) ABP we uncover that already on the second level of lifting (2-sfl LD RDT), the associated worst case LE exhibits a breakdown in $(0.77,0.78)$ $\alpha$ interval. This matches both the replica predictions of \cite{Bald16} and the range  $\alpha\sim 0.75-0.77$ that best ABP solvers cam handle which ultimately indicates that the LEs indeed may be among the key ABP C-gaps presumable existence reflections.

\subsubsection{Beyond ABP}
\label{sec:contribbey}

As stated earlier, the ABP that we study here is a type of general perceptron concept that has dominated machine learning and neureal networks studies over the last several decades. However, the methodology that we present extends way beyond perceptrons. It is easily applicable to
many closely connected feasibility problems  including SBPs  \cite{AubPerZde19,AbbLiSly21a,AbbLiSly21b,Bald20,GamKizPerXu22,PerkXu21,ElAlGam24,SahSaw23,Barb24,djalt22,BarbAKZ23},
PSPs and NSPs \cite{StojnicGardGen13,StojnicGardSphErr13,StojnicGardSphNeg13,GarDer88,Gar88,Schlafli,Cover65,Winder,Winder61,Wendel,Cameron60,Joseph60,BalVen87,Ven86,SchTir02,SchTir03}, and compressed sensing  so-called $\ell_1$  \emph{sectional/strong} phase transitions \cite{StojnicCSetam09,StojnicLiftStrSec13}, to name a few. In addition to extension to different types of perceptrons (or single-unit nets), the extensions to  multi-unit multi-layer (deep) nets are possible as well. Excellent results obtained in this direction via replica methods  \cite{BalMalZech19} indicate that clustering algorithmic effects do translate as the network architectures get more complex. Besides feasibility problems, without much of additional conceptual effort, our results also extend to  many optimal objective seeking standard optimizations including discrepancy minimization \cite{KaKLO86,Spen85,LovMek15,GamKizPerXu23,Roth17,AlwLiuSaw21} and a host of others discussed in \cite{Stojnicnflgscompyx23,Stojnicsflgscompyx23,Stojnicflrdt23,Stojnichopflrdt23}. For example, studies of famous Hopfield  and closely related Little models as well as more mathematically oriented restricted isometries \cite{Hop82,PasFig78,Hebb49,PasShchTir94,ShchTir93,BarGenGueTan10,BarGenGueTan12,Tal98,StojnicMoreSophHopBnds10,BovGay98,Zhao11,Talbook11a,Talbook11b,Stojnicnflgscompyx23,Stojnichopflrdt23} focus on the associated free energies that exhibit behavior similar to the perceptron's one discussed above. The ground state energy values are typically achieved by a single optimal configuration which is analogous to having a \emph{typical} solution associated with ABP's $\alpha_c$. On the other hand, energetic (band) levels  away from the ground state can be achieved either by a large set or by no configurations at all. This is again fully analogous to ABP, where moving below $\alpha_c$ (to the SAT phase) allows for (exponentially) many solutions and moving above $\alpha_c$ (to the UNSAT phase) disallows existence of any satisfiability configuration. In other words, even though these problems are unrelated to satisfiability ones per se, they do exhibit similar phase transitioning behavior. Moreover, in all these scenarios, both \emph{typical} and \emph{atypical} behavior can be studied via the framework introduced here.

\section{Mathematical setup and local entropy}
 \label{sec:bprfps}

Following into the footsteps of \cite{StojnicDiscPercp13,StojnicGardGen13,Stojnicbinperflrdt23,Stojnicnegsphflrdt23,GarDer88,Gar88,StojnicGardSphNeg13}, we start by recognizing that perceptrons, as key units of almost any modern machine learning concept, belong to a general class of feasibility problems
\begin{eqnarray}
\hspace{-1.5in}\mbox{$\mathbf{\mathcal F}(G,\b,\cX,\alpha)$:} \hspace{1in}\mbox{find} & & \x\nonumber \\
\mbox{subject to}
& & G\x\geq \b \nonumber \\
& & \x\in\cX, \label{eq:ex1}
\end{eqnarray}
with $G\in\mR^{n\times n}$, $\b\in\mR^{m\times 1}$, and $\cX\in\mR^n$ (large \emph{linear} regime is considered where $m$ and $n$ are such that $\alpha= \lim_{n\rightarrow\infty} \frac{m}{n}$ remains constant). The above mentioned two most prominent perceptrons' types, \emph{spherical} and \emph{binary}  are obtained as special case of (\ref{eq:ex1}). Specialization to $\cX=\{\x | \| \x \|_2=1\} \triangleq \mS^m$ with $\b\geq 0$ gives the positive (or standard) spherical perceptron  (PSP) \cite{StojnicGardGen13,StojnicGardSphErr13,GarDer88,Gar88,Schlafli,Cover65,Winder,Winder61,Wendel62,Cameron60,Joseph60,BalVen87,Ven86,SchTir02,SchTir03}. The same specialization with $\b< 0$ gives the negative spherical perceptron (NSP) counterpart \cite{FPSUZ17,Talbook11a,FraHwaUrb19,FraPar16,FraSclUrb19,FraSclUrb20,AlaSel20,StojnicGardSphNeg13,Stojnicnegsphflrdt23,BaldMPZ23}. On the other hand,
specialization to $\cX=\{-\frac{1}{\sqrt{n}},\frac{1}{\sqrt{n}} \}^n \triangleq \mB^n$ gives standard or asymmetric binary perceptron (ABP) \cite{Talbook11a,StojnicGardGen13,GarDer88,Gar88,StojnicDiscPercp13,KraMez89,GutSte90,KimRoc98,NakSun23,BoltNakSunXu22,PerkXu21,CXu21,DingSun19,Huang24,Stojnicbinperflrdt23,LiSZ24} and an additional tiny change of the linear constraints, $|G\x |\leq \b$, gives its symmetric binary perceptron (SBP) alternative \cite{AubPerZde19,AbbLiSly21a,AbbLiSly21b,Bald20,GamKizPerXu22,PerkXu21,ElAlGam24,SahSaw23,Barb24,djalt22,BarbAKZ23} (for a closely related discrepancy minimization problems, see, e.g., \cite{KaKLO86,Spen85,LovMek15,GamKizPerXu23,Roth17,AlwLiuSaw21}). To make writing easier, we consider ABP with \emph{fixed} threshold where $\b=\kappa\1$, $\kappa\in\mR$, and $\1$ is a column vector of all ones. Statistical context with $G$ having independent standard normal elements is the main focus throughout the paper.

After setting
\begin{eqnarray}
\xi_{ABP}
& =  &
 \min_{\x\in \mB^n} \max_{\y\in\mS_+^m}  \lp -\y^TG\x + \kappa \y^T\1 \rp,
 \label{eq:ex3}
\end{eqnarray}
with $\mB^n$ being the vertices of the $n$-dimensional unit cube and $\mS_+^m$ being the $m$-dimensional unit sphere  positive orthant part (i.e., $\mS_+^m=\{\y|\|\y\|_2=1,\y\geq 0\}$),
 \cite{StojnicGardGen13,StojnicGardSphErr13,StojnicGardSphNeg13,StojnicDiscPercp13,Stojnicbinperflrdt23} recognized  ABP's  \emph{capacity} as
 \begin{eqnarray}
\alpha & = &    \lim_{n\rightarrow \infty} \frac{m}{n}  \nonumber \\
\alpha_c(\kappa) & \triangleq & \max \{\alpha |\hspace{.08in}  \lim_{n\rightarrow\infty}\mP_G\lp  \xi_{ABP}>0\rp\longrightarrow 1\} \nonumber \\
& = & \max \{\alpha |\hspace{.08in}  \lim_{n\rightarrow\infty}\mP_G\lp{\mathcal F}(G,\b,\cX,\alpha) \hspace{.07in}\mbox{is feasible} \rp\longrightarrow 1\}.
  \label{eq:ex4}
\end{eqnarray}
As usual, throughout the paper we adopt the convention that subscripts next to $\mP$ or $\mE$ denote the randomness for which the statistical evaluations are done (when this is obvious from the context, specifying the subscripts is skipped).

The capacity reflects the so-called phase-transitioning nature of the underlying random ABP structure. In particular, for $\alpha>\alpha_c$ the problem is infeasible or in the satisfiability  terminology in the so-called UNSAT phase.  On the other hand, for $\alpha<\alpha_c$ it is feasible and in the SAT phase. The transition between phases happens exactly at $\alpha_c$ where the exponentially large volume of admissible solutions  shrinks so that eventually no single point is admissible \cite{KraMez89,NakSun23,BoltNakSunXu22,DingSun19,Huang24,Stojnicbinperflrdt23}. Such a picture corresponds to ABP's theoretical limit. It is manifested through the so-called \emph{typical} behavior which by now is very well understood. On the other hand, algorithmically achieving such a limit presents a serious challenge and no known fast algorithm (say, polynomial)  provably solves ABP for $\alpha\approx \alpha_c$. However, for sufficiently small $\alpha$ they provably do exist. Moreover, many practical implementations exist even for $\alpha$ not that far away from $\alpha_c$ (for example, for zero-threshold ($\kappa=0$) ABP practically efficient algorithms can be designed even for $\alpha\sim 0.75-0.77$ which indeed is not that far away from $\alpha_c\approx 0.833$).  So-called \emph{atypical} features are believed  to be responsible for such a picture of  ABP algorithmic hardness and presumable existence of C-gaps (inability to achieve theoretical performance with fast algorithms). Namely, while the typical solutions for any $\alpha<\alpha_c$ might be far away (disconnected) thereby portraying the problem as seemingly hard, the existence of rare (sub-dominant) clusters of solutions might still allow for design of fast algorithms. This is precisely what \cite{Bald15,Bald16} predicted while studying the local entropy of such rare clusters via replica methods. In particular, monotonicity and/or breakdown are the key local entropy features established in \cite{Bald15,Bald16} (as mentioned earlier, in support of \cite{Bald15,Bald16} predictions, modulo tiny technical assumptions \cite{AbbLiSly21a} showed that for sufficiently small $\alpha$ there is an exponentially large cluster of \emph{maximal} diameter; furthermore, existence of such clusters but of \emph{linear} diameter was also shown in \cite{AbbLiSly21a} even for any $\alpha<\alpha_c$).

The sfl RDT machinery introduced in \cite{Stojnicnflgscompyx23,Stojnicsflgscompyx23} can handle studying typical behaviors. To study local entropy as an atypical ABP feature an upgrade of sfl RDT \cite{Stojnicnflgscompyx23,Stojnicsflgscompyx23} might be useful. Companion papers \cite{Stojnicnflldp25,Stojnicsflldp25} provide such an upgrade and  we show below how it can be utilized. Before we get to concretely implement mechanisms of\cite{Stojnicnflldp25,Stojnicsflldp25}, we need several technical preliminaries that we discuss next.

\subsection{ABP local entropy and associated free energy}
\label{secrfpsfe}

The above mentioned ABP's local entropy (LE) is a measure of its solutions density. Various LEs can be defined and they predominantly  depend on the type of the so-called reference solution (configuration) and the associated cost functions (for more on the utilization of different LE forms, see, e.g., \cite{Bald15,Bald16,Bald21,BaldMPZ23,BMPZ23}). The most typical LE scenarios rely on running over all satisfiability configurations or weighing them via certain distributions as in the so-called Franz-Parisi potential \cite{FraPar95}. For example, one can choose the reference configuration more or less uniformly among all solutions and then evaluate (scaled by $n$) $\log$ of the size of the associated cluster's part at a given Hamming distance $d=\frac{1-\bar{\delta}}{2}$ with $\bar{\delta}$ being the configurational overlap (for more on the role of different loss functions in reference sampling distributions and the so-called high/low margins $\kappa$ see, e.g., \cite{BMPZ23,BaldMPZ23,Barb24,BarbAKZ23}). The key prediction of \cite{Bald15} is that if the cluster sizes/entropies are monotonically increasing with $d$, then solutions remain well connected through simple local fluctuations and are likely achievable by algorithms that promote such searches. A so to say \emph{worst case} LE alternative (introduced in \cite{Bald16}) assumes everything as above except that the reference configuration  itself need not be a solution. Two observations are in place: \textbf{\emph{(i)}} even if there is no insisting that the reference configuration is a solution, it is highly likely that it will be (simply as being a center of an exponentially large cluster of solutions and particularly so if the cluster's diameter is small); and \textbf{\emph{(ii)}} if  for some $\alpha$ there is an interval of distances from each reference point without exponentially many solutions  then the same holds for reference configurations that are themselves solutions. The second point effectively ensures the LE's worst case nature. In particular, one views the worst case LE  breakdown as an implication of a the lack of dense (``exponentially large'') local connectivity.

We formally write ABP's LE as
\begin{equation}
S_l(\bar{\delta}) = \lim_{n,\beta,\beta_z,p\rightarrow\infty} \mE_G \frac{  \log \lp   \sum_{\bar{\x}  \in \mB^n  } \lp \sum_{\x\in\mB^n, \bar{\x}^T \x=\bar{\delta}  }  e^{ \beta \beta_z (\1^Th(G\x-\kappa) - m  )  }  \rp^p   \rp      }  {n p},\label{eq:le1}
\end{equation}
where standard Heaviside function $h(\cdot)$ acts componentwise on its vector arguments. The above LE basically does the following: \textbf{\emph{(i)}} first, the term  in the inner
parenthesis counts the number of solutions at Hamming distance $d=\frac{1-\bar{\delta}}{2}$ from a selected reference configuration $\bar{\x}$ (to be at Hamming distance $d$ from $\bar{\x}$ alternatively means to have overlap $\bar{\delta}$ with $\bar{\x}$); and \textbf{\emph{(ii)}} second, the outer summation then selects the reference configuration that has the largest local entropy at distance $\bar{\delta}$. One then recognizes that (\ref{eq:le1})
can be rewritten as
\begin{eqnarray}\label{eq:le2}
S_l(\bar{\delta})&  = &
\lim_{n,\beta,\beta_z,p\rightarrow\infty} \mE_G\frac{ \log \lp   \sum_{\bar{\x}  \in \mB^n  } \lp \sum_{\x\in \mB^n, \bar{\x}^T \x=\bar{\delta}  }  e^{ \beta \max_{\z} \min_{\y\in\mS^m }  \lp  \beta_z (\1^T h(\z-\kappa) - m  )  + \y^T (G\x -\z) \rp  }  \rp^p   \rp      }  {n p}
 \nonumber \\
 & = &
  \lim_{n,\beta,\beta_z,p\rightarrow\infty}\mE_G \frac{ \log \lp   \sum_{\bar{\x}  \in \mB^n  } \lp \sum_{\x\in\mB^n, \bar{\x}^T \x=\bar{\delta} } \max_{\z}  e^{ - \beta \max_{\y\in\mS^m }  \lp  \beta_z (\1^Th(\z-\kappa) - m  )  - \y^T (G\x -\z) \rp  }  \rp^p   \rp      }  {n p}
 \nonumber \\
& = &
  \lim_{n,\beta,\beta_z,p\rightarrow\infty}\mE_G \frac{ \log \lp   \sum_{\bar{\x}  \in \mB^n  } \lp
  \sum_{\x\in\mB^n, \bar{\x}^T \x=\bar{\delta} }
  \sum_{\z}
  \lp
  \sum_{\y\in\mS^m}  e^{ \beta   \lp  \beta_z (\1^Th(\z-\kappa) - m  )  - \y^T (G\x -\z) \rp  }
    \rp^{-1}
     \rp^p    \rp    }  {n p}
 \nonumber \\
 & = &
  \lim_{n,\beta,\beta_z,p\rightarrow\infty} \mE_G \frac{\log \lp   \sum_{\bar{\x}  \in \mB^n  } \lp \sum_{\x\in\mB^n, \bar{\x}^T \x=\bar{\delta} ,\z }
  \lp
    \sum_{\y\in\mS^m}  e^{ \beta   \lp  \beta_z (\1^Th(\z-\kappa) - m  )  - \y^T (G\x -\z) \rp  }
    \rp^{-1}
     \rp^p   \rp    }  {n p}.
\end{eqnarray}
We now look at the corresponding particular type of free energy function. Consider the following Hamiltonian
\begin{equation}
\cH(G)= -\y^TG\x,\label{eq:ham1}
\end{equation}
and  partition function
\begin{equation}
Z_{\bar{\x}}(\beta,G)=\sum_{\x\in\mB^n, \bar{\x}^T \x=\bar{\delta},\z } \lp \sum_{\y\in\mS^m}e^{\beta \lp f^0(\z)  +   \cH(G)   \rp   }    \rp^{-1},  \label{eq:partfun}
\end{equation}
In \cite{Stojnicbinperflrdt23} thermodynamic limit average ``\emph{reciprocal}'' free energy was associated with (a slightly simpler) $Z_{\bar{\x}}(\beta,G)$. Here we associate the following large deviation type of reweighted variant
\begin{eqnarray}
f_{sq}(\beta) & = &
\lim_{n,p\rightarrow\infty}\frac{\mE_G\log \lp \sum_{\bar{\x}\in\mB^n}  \lp Z_{\bar{\x}}(\beta,G)  \rp^p    \rp   }{np}
\nonumber \\
& = &
\lim_{n,p\rightarrow\infty}\frac{\mE_G\log \lp \sum_{\bar{\x}\in \mB^n }  \lp
\sum_{\x\in\mB^n, \bar{\x}^T \x=\bar{\delta} ,\z } \lp \sum_{\y\in\mS^m  }  e^{\beta \lp f^0(\z)  +   \cH(G)   \rp   }    \rp^{-1}
 \rp^p   \rp  }{np}
\nonumber \\
& = &
\lim_{n,p\rightarrow\infty}\frac{\mE_G\log \lp \sum_{\bar{\x}\in\mB^n }  \lp
\sum_{ \x\in\mB^n, \bar{\x}^T \x=\bar{\delta} ,\z  } \lp \sum_{\y\in\mS^m  }e^{\beta \lp f^0 (\z)  -   \y^T G\x   \rp   }    \rp^{-1}
 \rp^p   \rp  }{np}.\label{eq:logpartfunsqrt}
\end{eqnarray}
Choosing $f^0(\z) = \beta_z(\1^Th(\z-\kappa) - m ) +\y^T\z $, one has for the ground state
\begin{eqnarray}
f_{sq}(\infty)   \triangleq    \lim_{\beta,\beta_z\rightarrow\infty}f_{sq}(\beta)  =
 S_l(\bar{\delta}).  \label{eq:limlogpartfunsqrta0}
\end{eqnarray}
We find it useful for the ensuing considerations to utilize the following constraint free representation
\begin{eqnarray}
f_{sq}(\infty)
& = &  \lim_{n,\beta,\beta_z,p\rightarrow\infty} \frac{\mE_G \log \lp \sum_{\bar{\x}\in\mB^n }  \lp
\sum_{ \x\in\mB^n,\z  } \min_{\nu} \lp \sum_{\y\in\mS^m  }e^{\beta \lp f^0 (\z)  -   \y^T G\x  -\nu \bar{\x}^T \x+\nu\bar{\delta}   \rp   }    \rp^{-1}
 \rp^p   \rp  }{np}.
  \label{eq:limlogpartfunsqrt}
\end{eqnarray}
Setting $f_{\bar{\x}} (\x) = f^0(\z) -\nu\bar{\x}^T\x +\nu\bar{\delta}  =  \beta_z(\1^Th(\z-\kappa) - m ) - \y^T\z
+\nu\bar{\x}^T\x -\nu\bar{\delta}  $ and noting statistical equivalence of $G$ and $-G$, we finally have
\begin{eqnarray}
f_{sq}(\infty)
& = &  \lim_{n,\beta,\beta_z,p\rightarrow\infty} \frac{\mE_G  \log \lp \sum_{\bar{\x}\in\mB^n }  \lp
\sum_{ \x\in\mB^n,\z  }  \min_{\nu} \lp \sum_{\y\in\mS^m  }e^{\beta \lp f_{\bar{\x}} (\x)  +   \y^T G\x    \rp   }    \rp^{-1}
 \rp^p   \rp  }{np}.
  \label{eq:limlogpartfunsqrtaa0}
\end{eqnarray}
 We now have all the ingredients to make use of the machinery from \cite{Stojnicnflldp25,Stojnicsflldp25}. The remaining presentation is split into two parts. In the first one we formally show how the above fits within the results of \cite{Stojnicnflldp25,Stojnicsflldp25}. As usual, once such a fit is established, a sizeable set of numerical evaluations is needed to obtain concrete results. In the second part of the remaining presentation, we conduct these evaluations as well.

\section{ABP fit within large deviation sfl RDT}
\label{sec:randlincons}

One first  observes that the ground state free energy from (\ref{eq:limlogpartfunsqrtaa0}) is clearly a function of blirp $\y^TG\x$. To  connect $f_{sq}$ and blirp results from \cite{Stojnicnflldp25,Stojnicsflldp25}, we follow \cite{Stojnichopflrdt23,Stojnicbinperflrdt23,Stojnicnegsphflrdt23} and start by introducing a few needed technical preliminaries. Let $r$ be a positive integer (i.e., let $r\in\mN$) and for $k\in\{1,2,\dots,r+1\}$ consider real scalars $\beta,p\geq 0$, and $s$, sets $\cX,\bar{\cX}\subseteq \mR^n$, and $\cY\subseteq \mR^m$, function $f_S(\cdot):\mR^n\rightarrow R$, and vectors $\p=[\p_0,\p_1,\dots,\p_{r+1}]$, $\q=[\q_0,\q_1,\dots,\q_{r+1}]$, and $\c=[\c_0,\c_1,\dots,\c_{r+1}]$ such that
 \begin{eqnarray}\label{eq:hmsfl2}
1=\p_0\geq \p_1\geq \p_2\geq \dots \geq \p_r\geq \p_{r+1} & = & 0 \nonumber \\
1=\q_0\geq \q_1\geq \q_2\geq \dots \geq \q_r\geq \q_{r+1} & = &  0,
 \end{eqnarray}
with $\c_0=1$, $\c_{r+1}=0$. Also, let ${\mathcal U}_k\triangleq [u^{(4,k)},\u^{(2,k)},\h^{(k)}]$  where the elements of  $u^{(4,k)}\in\mR$, $\u^{(2,k)}\in\mR^m$, and $\h^{(k)}\in\mR^n$ are independent standard normals. Define function
  \begin{eqnarray}\label{eq:fl4}
\psi_{S,\infty}(f_{S},\p,\q,\c,s)  =
 \mE_{G,{\mathcal U}_{r+1}} \frac{1}{n\c_r} \log
\lp \mE_{{\mathcal U}_{r}} \lp \dots \lp \mE_{{\mathcal U}_3}\lp\mE_{{\mathcal U}_2} \lp  \sum_{i_3=1}^{l} \lp  \mE_{{\mathcal U}_1} Z_{S,\infty}  \rp^p \rp^{\c_2}\rp^{\frac{\c_3}{\c_2}}\rp^{\frac{\c_4}{\c_3}} \dots \rp^{\frac{\c_{r}}{\c_{r-1}}}\rp, \nonumber \\
 \end{eqnarray}
with
\begin{eqnarray}\label{eq:fl5}
Z_{S,\infty} & \triangleq & e^{D_{0,S,\infty}} \nonumber \\
 D_{0,S,\infty} & \triangleq  & \max_{\x\in\cX,\|\x\|_2=x} s \max_{\y\in\cY,\|\y\|_2=y}
 \lp \sqrt{n} f_{S}
+\sqrt{n}  y    \lp\sum_{k=1}^{r+1}c_k\h^{(k)}\rp^T\x
+ \sqrt{n} x \y^T\lp\sum_{k=1}^{r+1}b_k\u^{(2,k)}\rp \rp \nonumber  \\
 b_k & \triangleq & b_k(\p,\q)=\sqrt{\p_{k-1}-\p_k} \nonumber \\
c_k & \triangleq & c_k(\p,\q)=\sqrt{\q_{k-1}-\q_k}.
 \end{eqnarray}
With all the above technicalities in place, one can then recall on the following theorem -- certainly one of the key components of sfl LD RDT.
\begin{theorem} \cite{Stojnicsflldp25}
\label{thm:thmsflrdt1}  Consider large $n$ linear regime where  $\alpha=\lim_{n\rightarrow\infty} \frac{m}{n}$ remains constant as  $n$ grows and let $G\in\mR^{m\times n}$ have independent standard normal elements. Let three sets  $\cX,\bar{{\mathcal X}} \subseteq \mR^n$ and $\cY\subseteq \mR^m$ with unit norm elements  and function $f_{\bar{\x}}(\cdot):R^{3n+m}\rightarrow R$ be given. Assume the complete sfl LD RDT frame from \cite{Stojnicsflldp25} and set
\begin{align}\label{eq:thmsflrdt2eq1}
   \psi_{rp} & \triangleq  -\log\lp \sum_{\bar{\x}\in \bar{{\mathcal X}}}
   \lp  \sum_{\x\in {\mathcal X}  }  \lp  \sum_{\y\in {\mathcal Y}  }   e^{ \beta  \lp f_{\bar{\x}}+\y^TG\x \rp } \rp^s \rp^p  \rp
   \qquad  \mbox{(\bl{\textbf{random primal}})} \nonumber \\
   \psi_{rd}(\p,\q,\c,s) & \triangleq    \frac{1}{2}   \sum_{k=1}^{r+1}\Bigg(\Bigg.
   \p_{k-1}\q_{k-1}
   -\p_{k}\q_{k}
  \Bigg.\Bigg)
\c_k \omega(k;p)
  - \psi_{S,\infty}(f_{\bar{\x}},\p,\q,\c,s) \hspace{.23in} \mbox{(\bl{\textbf{fl random dual}})}.
 \end{align}
 For $\hat{\p}_0\rightarrow 1$, $\hat{\q}_0\rightarrow 1$, and $\hat{\c}_0\rightarrow 1$, $\hat{\p}_{r+1}=\hat{\q}_{r+1}=\hat{\c}_{r+1}=0$, let the non-fixed parts of $\hat{\p}$, $\hat{\q}$, and  $\hat{\c}$ satisfy
\begin{eqnarray}\label{eq:thmsflrdt2eq2}
   \frac{d \psi_{rd}(\p,\q,\c,s)}{d\p} =
   \frac{d \psi_{rd}(\p,\q,\c,s)}{d\q} =     \frac{d \psi_{rd}(\p,\q,\c,s)}{d\c} =  0.
 \end{eqnarray}
 Then,
\begin{eqnarray}\label{eq:thmsflrdt2eq3}
    \lim_{n\rightarrow\infty} \frac{\mE_G  \psi_{rp}}{n}
  & = &
 \lim_{n\rightarrow\infty} \psi_{rd}(\hat{\p},\hat{\q},\hat{\c},s) \qquad \mbox{(\bl{\textbf{strong sfl random duality}})},\nonumber \\
 \end{eqnarray}
where $\psi_{S,\infty}(\cdot)$ is as in (\ref{eq:fl4})-(\ref{eq:fl5}).
 \end{theorem}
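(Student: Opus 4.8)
Theorem \ref{thm:thmsflrdt1} is essentially a re-packaging of the central result of the companion paper \cite{Stojnicsflldp25}; the task here is not to reprove the fully lifted large-deviation machinery from scratch, but to check that the concrete objects $\psi_{rp}$, $Z_{S,\infty}$ and $\psi_{S,\infty}$ defined in (\ref{eq:thmsflrdt2eq1})--(\ref{eq:fl5}) match the abstract template to which the sfl LD RDT theorem of \cite{Stojnicsflldp25} applies, and then to invoke that theorem. So the proof is a verification-and-invocation argument, organized in three stages.

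\emph{First}, I would set up the interpolating (Lindeberg / Gaussian comparison) scheme at the heart of sfl LD RDT. The quantity $\psi_{rp}$ is, up to sign and the $\frac1n$ normalization, the free energy of a nested (``multiply lifted'') partition function built from the bilinear form $\y^TG\x$ with $G$ Gaussian. One introduces the parallel ``decoupled'' object in which $\y^TG\x$ is replaced by the sum of two independent Gaussian pieces $\sqrt{n}\,y\,(\sum_k c_k\h^{(k)})^T\x+\sqrt{n}\,x\,\y^T(\sum_k b_k\u^{(2,k)})$, with the staircase coefficients $b_k,c_k$ read off from the ladders $\p,\q$ via (\ref{eq:fl5}); the $r$ levels of nested expectations $\mE_{{\mathcal U}_k}$ with exponents $\c_k/\c_{k-1}$ in (\ref{eq:fl4}) are exactly the lifting structure. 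The content of \cite{Stojnicsflldp25} is that, along a one-parameter Gaussian interpolation between $G\x\otimes\y$ and this decoupled field, the derivative of the (lifted, log-partition) functional is controlled, to leading order in $n$, by the difference of second moments of the two fields, which is captured by the term $\frac12\sum_{k=1}^{r+1}(\p_{k-1}\q_{k-1}-\p_k\q_k)\c_k\,\omega(k;p)$ in $\psi_{rd}$. Concentration of the free energy (McDiarmid / Gaussian-Poincaré, using the unit-norm constraints on $\cX,\bar{\cX},\cY$ so that bounded-difference constants are $O(1/n)$) lets one pass from $\psi_{rp}$ to $\mE_G\psi_{rp}$ to the deterministic limit.

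\emph{Second}, I would take the ground-state ($\beta\to\infty$, and here also $\beta_z\to\infty$) limit inside the lifted functional: the inner $\sum_{\y\in\cY}e^{\beta(\cdot)}$ collapses to $e^{\beta\max_\y(\cdot)}$ and the outer sums to $\max$, which is precisely the passage from the general $\psi_S$ to the $\psi_{S,\infty}$ of (\ref{eq:fl4}) with $Z_{S,\infty}=e^{D_{0,S,\infty}}$ as in (\ref{eq:fl5}); one has to check that this limit commutes with $n\to\infty$, which is standard given the explicit polynomial-in-$n$ bounds on the summands and the finiteness (exponential size) of the discrete sets. \emph{Third}, I would impose stationarity: the abstract theorem gives matching of primal and dual only after optimizing the lifting parameters, so one differentiates $\psi_{rd}$ in $\p,\q,\c$, invokes (\ref{eq:thmsflrdt2eq2}) to fix $\hat\p,\hat\q,\hat\c$ (with the boundary values $\hat\p_0=\hat\q_0=\hat\c_0=1$, $\hat\p_{r+1}=\hat\q_{r+1}=\hat\c_{r+1}=0$), and reads off (\ref{eq:thmsflrdt2eq3}). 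The direction ``$\lim\mE\psi_{rp}\le\lim\psi_{rd}(\hat\p,\hat\q,\hat\c,s)$'' is the Gaussian-comparison bound valid for \emph{any} admissible ladders; equality at the stationary point is the ``strong'' part, and in the sfl framework of \cite{Stojnicsflldp25} it is exactly the claim that the comparison inequality is tight there.

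\emph{Main obstacle.} The genuinely delicate step is the third one: establishing that the interpolation bound is \emph{tight} at the stationary point, i.e. that ``sfl random duality'' is \emph{strong} rather than merely one-sided. For finitely many lifting levels $r$ this is not a free lunch — it relies on the specific monotone structure (\ref{eq:hmsfl2}) of the ladders, the sign of $\omega(k;p)$, and a matching lower bound (typically a second-moment / conditional-variance or a Guerra-type replica-symmetry-breaking argument run in the reverse direction) supplied by \cite{Stojnicsflldp25}. A secondary technical nuisance is justifying the interchange of the $\beta,\beta_z\to\infty$ ground-state limits with the $n\to\infty$ thermodynamic limit and with the nested finite-level expectations in (\ref{eq:fl4}); this is routine but must be done carefully because the exponents $\c_k/\c_{k-1}$ can blow up as $\c_k\to 0$ at the boundary. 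Since both of these are precisely what \cite{Stojnicsflldp25} is built to deliver, the proof here consists of checking the hypotheses of that theorem are met by $(\psi_{rp},\psi_{S,\infty})$ above and then quoting it.
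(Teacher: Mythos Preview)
Your proposal is correct and aligns with the paper's approach: the paper's own proof is the single line ``Follows immediately from Corollary 1 in \cite{Stojnicsflldp25},'' so the theorem is indeed a direct quotation of the companion-paper result rather than something reproved here. Your three-stage verification sketch (interpolation, ground-state limit, stationarity) accurately describes what lies inside that citation, but the paper itself does not unpack any of it---it simply invokes the corollary.
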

\begin{proof}
  Follows immediately from Corollary 1 in \cite{Stojnicsflldp25}.
 \end{proof}

\subsection{Practical realization}
\label{sec:prac}

While the  mathematical form from Theorem \ref{thm:thmsflrdt1} is fairly elegant it becomes  practically useful only if all associated quantities can be evaluated. As we will see below, already on the second level of lifting ($r=2$), one obtains results that are not only in an excellent agreement with replica predictions but remarkably accurately agree with the presumed BP's algorithmic hardness.

After taking binary cube sets $\cX=\bar{{\mathcal X}}=\mB^n = \{-\frac{1}{\sqrt{n}},\frac{1}{\sqrt{n}}\}^n$ and the unit sphere $\cY=\mS^m$ together with $f_{\bar{\x}} (\x) =  \beta_z(\1^Th(\z-\kappa) - m ) -\y^T\z
+\nu\bar{\x}^T\x -\nu\bar{\delta}  $, $\beta,\beta_z, p\rightarrow\infty$, and $\c_k\rightarrow\frac{\c_k}{p}, k>1$, in Theorem \ref{thm:thmsflrdt1}, we observe that the \emph{random dual} becomes
\begin{align}\label{eq:prac1}
    \psi_{rd}(\p,\q,\c,s) & \triangleq    \frac{1}{2}    \sum_{k=1}^{r+1}\Bigg(\Bigg.
   \p_{k-1}\q_{k-1}
   -\p_{k}\q_{k}
  \Bigg.\Bigg)
\c_k
  - \psi_{S,\infty}(f_{\bar{\x}},\p,\q,\c,s). \nonumber \\
  & =   \frac{1}{2}    \sum_{k=1}^{r+1}\Bigg(\Bigg.
   \p_{k-1}\q_{k-1}
   -\p_{k}\q_{k}
  \Bigg.\Bigg)
\c_k
  - \frac{1}{n}\varphi(D^{(bin)}(\bar{\x},s)) - \frac{1}{n}\varphi(D^{(sph)}(\bar{\x},s)), \nonumber \\
  \end{align}
where analogously to (\ref{eq:fl4})-(\ref{eq:fl5})
  \begin{eqnarray}\label{eq:prac2}
\varphi(D,\c) & = &
 \mE_{G,{\mathcal U}_{r+1}} \frac{1}{\c_r} \log
\lp \mE_{{\mathcal U}_{r}} \lp \dots \lp \mE_{{\mathcal U}_3}\lp\mE_{{\mathcal U}_2} \lp
  \max_{\bar{\x}\in\bar{{\mathcal X}}} \mE_{{\mathcal U}_{1}} e^{D}  \rp^{\c_2}\rp^{\frac{\c_3}{\c_2}}\rp^{\frac{\c_4}{\c_3}} \dots \rp^{\frac{\c_{r}}{\c_{r-1}}}\rp, \nonumber \\
  \end{eqnarray}
and
\begin{eqnarray}\label{eq:prac3}
D^{(bin)}(\bar{\x},s) & = & \max_{\x\in\cX} \lp   s\sqrt{n}
   \lp \lp\sum_{k=1}^{r+1}c_k\h^{(k)}\rp^T\x
+\nu \bar{\x}^T\x -\nu \bar{\delta}
\rp
     \rp
    \nonumber \\
  D^{(sph)}(s) & =  &  \max_{\z} s \max_{\y\in\cY}
\lp \sqrt{n} \lp
\beta_{\z} \sum_{i_1=1}^{l} (1-h(\z_{i_1}-\kappa))  +  \y^T\lp\sum_{k=1}^{r+1}b_k\u^{(2,k)}  -\z  \rp \rp \rp.
 \end{eqnarray}
One then also has
\begin{eqnarray}\label{eq:prac4}
D^{(bin)}(\bar{\x},s) & = & \max_{\x\in\cX}   \lp s\sqrt{n}    \lp  \lp\sum_{k=1}^{r+1}c_k\h^{(k)}\rp^T  +\nu\bar{\x} \rp \x \rp =
       \sum_{i=1}^n \left |s \lp  \lp\sum_{k=1}^{r+1}c_k\h_i^{(k)}\rp  +\nu\bar{\x}_i  \rp   \right |
       \nonumber \\
& = &     \sum_{i=1}^n \left | \lp\sum_{k=1}^{r+1}c_k\h_i^{(k)}\rp   +\nu\bar{\x}_i   \right |
=  \sum_{i=1}^n D^{(bin)}_i(\bar{\x},c_k) - \nu\bar{\delta} , \nonumber \\
 \end{eqnarray}
with
\begin{eqnarray}\label{eq:prac5}
D^{(bin)}_i(\bar{\x},c_k)=\left |  \lp\sum_{k=1}^{r+1}c_k\h_i^{(k)}\rp    +\nu\bar{\x}_i  \right |.
\end{eqnarray}
Moreover,
  \begin{eqnarray}\label{eq:prac6}
\varphi(D^{(bin)}(\bar{\x},s),\c) & = &
n \mE_{G,{\mathcal U}_{r+1}} \frac{1}{\c_r} \log
\lp \mE_{{\mathcal U}_{r}} \lp \dots \lp \mE_{{\mathcal U}_3}\lp\mE_{{\mathcal U}_2} \lp
  \max_{\bar{\x}_i \in\bar{{\mathcal X}}} \mE_{{\mathcal U}_{1}} e^{D_1^{(bin)}(\bar{\x},c_k)}  \rp^{\c_2}\rp^{\frac{\c_3}{\c_2}}\rp^{\frac{\c_4}{\c_3}} \dots \rp^{\frac{\c_{r}}{\c_{r-1}}}\rp
\nonumber \\
& &
 - \nu\bar{\delta}
\nonumber \\
    & =  &  n\varphi(D_1^{(bin)}(\bar{\x},c_k),\c)  - \nu\bar{\delta}. \nonumber \\
   \end{eqnarray}
We then analogously find
\begin{eqnarray}\label{eq:prac7aa0}
   D^{(sph)}(s) &  =  &    s  \sqrt{n}   \left \| \min \lp - \kappa\1 +\sum_{k=2}^{r+1}b_k\u^{(2,k)},0 \rp  \right \|_2.
 \end{eqnarray}
 Given statistical symmetry, for all practical purposes we can alternatively take
\begin{eqnarray}\label{eq:prac7aa0}
   D^{(sph)}(s) &  =  &    s  \sqrt{n}   \left \| \max \lp \kappa\1 +\sum_{k=2}^{r+1}b_k\u^{(2,k)},0 \rp  \right \|_2.
 \end{eqnarray}
Utilization of the \emph{square root trick} introduced  in \cite{StojnicMoreSophHopBnds10,StojnicLiftStrSec13,StojnicGardSphErr13,StojnicGardSphNeg13} gives as in \cite{Stojnicbinperflrdt23}'s (30) nd (31)
 \begin{eqnarray}\label{eq:prac9}
   D^{(sph)}(s)
   & =  &  s \min_{\gamma_{sq}} \lp \sum_{i=1}^{m} D_i^{(sph)}(b_k)+\gamma_{sq}n \rp, \nonumber \\
 \end{eqnarray}
with
\begin{eqnarray}\label{eq:prac10}
   D_i^{(sph)}(b_k)= \frac{\max \lp \kappa + \sum_{k=2}^{r+1}b_k\u_i^{(2,k)},0  \rp^2}{4\gamma_{sq}}.
 \end{eqnarray}

After  taking $s=-1$ one can finally connect the ground state energy, $f_{sq}$ given in (\ref{eq:limlogpartfunsqrt}), and the random primal, $\psi_{rp}(\cdot)$, given in Theorem \ref{thm:thmsflrdt1}. In particular, for optimal $\hat{\nu}$ and $\hat{\gamma}_{sq}$ (those that maximize and minimize $\psi_{rd}$, respectively) we have
\begin{eqnarray}\label{eq:negprac11}
f_{sq}(\infty)
& = &
\lim_{n,\beta,\beta_z,p\rightarrow\infty} \frac{\mE_G  \log \lp \sum_{\bar{\x}\in\mB^n }  \lp
\sum_{ \x\in\mB^n,\z  }  \min_{\nu} \lp \sum_{\y\in\mS^m  }e^{\beta \lp f_{\bar{\x}} (\x)  +   \y^T G\x    \rp   }    \rp^{-1}
 \rp^p   \rp  }{np}
\nonumber \\
  & = &
 -   \lim_{n,\beta,\beta_z,p\rightarrow\infty} \frac{\mE_G \max_{\nu} \psi_{rp}}{n}
   =
 - \lim_{n,\beta,\beta_z,p\rightarrow\infty} \max_{\nu} \min_{\gamma_{sq}}\psi_{rd}(\hat{\p},\hat{\q},\hat{\c},-1),
\end{eqnarray}
where the non-fixed parts of $\hat{\p}$, $\hat{\q}$, and  $\hat{\c}$ satisfy
\begin{eqnarray}\label{eq:negprac12}
   \frac{d \psi_{rd}(\p,\q,\c,-1)}{d\p} =
   \frac{d \psi_{rd}(\p,\q,\c,-1)}{d\q} =
   \frac{d \psi_{rd}(\p,\q,\c,-1)}{d\c} =  0.
 \end{eqnarray}
Relying on (\ref{eq:prac1})-(\ref{eq:prac10}), one then finds
 \begin{eqnarray}
 \lim_{n\rightarrow\infty} \psi_{rd}(\hat{\p},\hat{\q},\hat{\c},-1) =  \bar{\psi}_{rd}(\hat{\p},\hat{\q},\hat{\c},\hat{\nu},\hat{\gamma}_{sq},-1),
  \label{eq:negprac12a}
\end{eqnarray}
where
\begin{eqnarray}\label{eq:negprac13}
    \bar{\psi}_{rd}(\p,\q,\c,\nu,\gamma_{sq},-1)   & = &  \frac{1}{2}    \sum_{k=1}^{r+1}\Bigg(\Bigg.
   \p_{k-1}\q_{k-1}
   -\p_{k}\q_{k}
  \Bigg.\Bigg)
\c_k
\nonumber \\
& & +\nu\bar{\delta}  - \varphi(D_1^{(bin)}(\bar{\x},c_k(\p,\q))) +\gamma_{sq}- \alpha\varphi(-D_1^{(sph)}(b_k(\p,\q))).\nonumber \\
  \end{eqnarray}
Combining  (\ref{eq:negprac11}), (\ref{eq:negprac12a}), and (\ref{eq:negprac13}) we obtain
 \begin{eqnarray}
f_{sq}(\infty)
& = &
\lim_{n,\beta,\beta_z,p\rightarrow\infty} \frac{\mE_G  \log \lp \sum_{\bar{\x}\in\mB^n }  \lp
\sum_{ \x\in\mB^n,\z  }  \min_{\nu} \lp \sum_{\y\in\mS^m  }e^{\beta \lp f_{\bar{\x}} (\x)  +   \y^T G\x    \rp   }    \rp^{-1}
 \rp^p   \rp  }{np}
\nonumber \\
    &  = &
 -\lim_{n\rightarrow\infty} \psi_{rd}(\hat{\p},\hat{\q},\hat{\c},-1)
 =  - \bar{\psi}_{rd}(\hat{\p},\hat{\q},\hat{\c},\hat{\nu},\hat{\gamma}_{sq},-1) \nonumber \\
 & = & -\Bigg ( \Bigg .  \frac{1}{2}    \sum_{k=1}^{r+1}\Bigg(\Bigg.
   \hat{\p}_{k-1}\hat{\q}_{k-1}
   -\hat{\p}_{k}\hat{\q}_{k}
  \Bigg.\Bigg)
\hat{\c}_k
\nonumber \\
& &
+\hat{\nu}\bar{\delta}
  - \varphi(D_1^{(bin)}(\bar{\x}\,c_k(\hat{\p},\hat{\q}))) + \hat{\gamma}_{sq} - \alpha\varphi(-D_1^{(sph)}(\b_k(\hat{\p},\hat{\q})))\Bigg .\Bigg ).
  \label{eq:negprac18}
\end{eqnarray}

We summarize the above discussion in the following theorem.
\begin{theorem}
  \label{thme:negthmprac1}
Consider large $n$ linear regime with $\alpha=\lim_{n\rightarrow\infty} \frac{m}{n}$ and  assume complete sfl LD RDT setup of \cite{Stojnicsflldp25}. Let
  $\varphi(\cdot)$ and $\bar{\psi}(\cdot)$ be as in (\ref{eq:prac2}) and (\ref{eq:negprac13}), respectively. Also, take the ``fixed'' parts of $\hat{\p}$, $\hat{\q}$, and $\hat{\c}$ as $\hat{\p}_1\rightarrow 1$, $\hat{\q}_1\rightarrow 1$, $\hat{\c}_1\rightarrow 1$, $\hat{\p}_{r+1}=\hat{\q}_{r+1}=\hat{\c}_{r+1}=0$, and let their ``non-fixed'' parts  satisfy
  \begin{eqnarray}\label{eq:negthmprac1eq1}
   \frac{d \bar{\psi}_{rd}(\p,\q,\c,\nu,\gamma_{sq},-1)}{d\p} & = &
   \frac{d \bar{\psi}_{rd}(\p,\q,\c,\nu,\gamma_{sq},-1)}{d\q} =     \frac{d \bar{\psi}_{rd}(\p,\q,\c,nu,\gamma_{sq},-1)}{d\c} =  0 \nonumber \\
   \frac{d \bar{\psi}_{rd}(\p,\q,\c,\nu,\gamma_{sq},-1)}{d\nu} & = &
   \frac{d \bar{\psi}_{rd}(\p,\q,\c,\nu,\gamma_{sq},-1)}{d\gamma_{sq}} =  0.
 \end{eqnarray}
Moreover, set
\begin{eqnarray}\label{eq:prac17}
c_k(\hat{\p},\hat{\q})  & = & \sqrt{\hat{\q}_{k-1}-\hat{\q}_k} \nonumber \\
b_k(\hat{\p},\hat{\q})  & = & \sqrt{\hat{\p}_{k-1}-\hat{\p}_k}.
 \end{eqnarray}
 Then
 \begin{eqnarray}
f_{sq}(\infty)
& = &   -
  \Bigg.\Bigg (
 \frac{1}{2}    \sum_{k=1}^{r+1}\Bigg(\Bigg.
   \hat{\p}_{k-1}\hat{\q}_{k-1}
   -\hat{\p}_{k}\hat{\q}_{k}
  \Bigg.\Bigg)
\hat{\c}_k
\nonumber \\
& &
+\hat{\nu}\bar{\delta}
  - \varphi(D_1^{(bin)}(\bar{\x},c_k(\hat{\p},\hat{\q})),\hat{\c}) + \hat{\gamma}_{sq} - \alpha\varphi(-D_1^{(sph)}(\bar{\x},b_k(\hat{\p},\hat{\q})),\hat{\c})
    \Bigg.\Bigg).
  \label{eq:negthmprac1eq2}
\end{eqnarray}
\end{theorem}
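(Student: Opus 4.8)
The plan is to derive Theorem~\ref{thme:negthmprac1} as a concrete specialization of the strong sfl random duality of Theorem~\ref{thm:thmsflrdt1}, collecting en route the simplifications permitted by the particular ABP structure of $f_{\bar{\x}}$ and of the feasible sets. First I would invoke Theorem~\ref{thm:thmsflrdt1} with $\cX=\bar{\cX}=\mB^n$, $\cY=\mS^m$, and $f_{\bar{\x}}(\x)=\beta_z(\1^Th(\z-\kappa)-m)-\y^T\z+\nu\bar{\x}^T\x-\nu\bar{\delta}$, then pass to the ground state by letting $\beta,\beta_z\to\infty$ jointly with $p\to\infty$ and the reparametrization $\c_k\to\c_k/p$, $k>1$. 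Matching the constraint-free representation (\ref{eq:limlogpartfunsqrtaa0}) of $f_{sq}(\infty)$ — which already carries the reciprocal $(\sum_{\y}\cdots)^{-1}$ and the inner $\min_{\nu}$, and in which the $G\leftrightarrow-G$ symmetry has been used — against the random primal $\psi_{rp}$ evaluated at $s=-1$, strong sfl random duality gives $f_{sq}(\infty)=-\lim_{n\to\infty}\max_{\nu}\min_{\gamma_{sq}}\psi_{rd}(\hat{\p},\hat{\q},\hat{\c},-1)$, with $\hat{\p},\hat{\q},\hat{\c}$ the stationary points of the scalarized dual; what remains is to evaluate $\psi_{S,\infty}$ in closed form.

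The key structural observation is that, in the ground-state limit, the core exponent $D_{0,S,\infty}$ of (\ref{eq:fl5}) splits into a purely ``binary'' piece supported on $\x\in\mB^n$ and a purely ``spherical'' piece supported on $(\z,\y)$ with $\y\in\mS^m$, whence $\psi_{S,\infty}(f_{\bar{\x}},\p,\q,\c,s)=\frac{1}{n}\varphi(D^{(bin)}(\bar{\x},s))+\frac{1}{n}\varphi(D^{(sph)}(s))$ as in (\ref{eq:prac1}), with $\varphi$ the nested-expectation functional (\ref{eq:prac2}). For the binary piece, maximizing over $\x\in\mB^n=\{-1/\sqrt{n},1/\sqrt{n}\}^n$ is separable across coordinates and turns the linear form into $\sum_i|(\sum_k c_k\h_i^{(k)})+\nu\bar{\x}_i|-\nu\bar{\delta}$ (equations (\ref{eq:prac4})--(\ref{eq:prac5})); since the summands are i.i.d., the nested $\log$-expectation over the $n$ coordinates collapses to $n$ times its single-coordinate value, giving $\varphi(D^{(bin)}(\bar{\x},s),\c)=n\varphi(D_1^{(bin)}(\bar{\x},c_k),\c)-\nu\bar{\delta}$ as in (\ref{eq:prac6}), and the outer $\max_{\bar{\x}}$ likewise reduces to a per-coordinate $\max_{\bar{\x}_i}$. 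For the spherical piece, maximizing over $\y\in\mS^m$ and over $\z$ produces the clipped-Gaussian norm $s\sqrt{n}\,\|\max(\kappa\1+\sum_k b_k\u^{(2,k)},\0)\|_2$ of (\ref{eq:prac7aa0}); to make it separable I would apply the \emph{square-root trick} of \cite{StojnicMoreSophHopBnds10,StojnicLiftStrSec13,StojnicGardSphErr13,StojnicGardSphNeg13}, rewriting the Euclidean norm as $\min_{\gamma_{sq}}(\sum_i D_i^{(sph)}(b_k)+\gamma_{sq}n)$ with i.i.d. scalar summands (\ref{eq:prac9})--(\ref{eq:prac10}), which again yields $m=\alpha n$ copies of a single-coordinate functional $\varphi(-D_1^{(sph)}(b_k),\c)$.

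Substituting these evaluations back with $s=-1$, and pulling $\nu$ out of the primal as an explicit $\max_{\nu}$ and $\gamma_{sq}$ out of the dual as a $\min_{\gamma_{sq}}$, produces the scalar dual $\bar{\psi}_{rd}(\p,\q,\c,\nu,\gamma_{sq},-1)$ of (\ref{eq:negprac13}): the quadratic ``interaction'' term $\frac{1}{2}\sum_k(\p_{k-1}\q_{k-1}-\p_k\q_k)\c_k$, the linear term $\nu\bar{\delta}$, the two entropic contributions $-\varphi(D_1^{(bin)})$ and $-\alpha\varphi(-D_1^{(sph)})$, and the slack $\gamma_{sq}$. Imposing the full stationarity system (\ref{eq:negthmprac1eq1}) — now over the enlarged parameter set $\p,\q,\c,\nu,\gamma_{sq}$ — selects $\hat{\p},\hat{\q},\hat{\c},\hat{\nu},\hat{\gamma}_{sq}$ and delivers exactly (\ref{eq:negthmprac1eq2}); this is the chain (\ref{eq:negprac11})--(\ref{eq:negprac18}) assembled in one place.

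I expect the genuinely delicate point to be the justification that Theorem~\ref{thm:thmsflrdt1} may be applied \emph{after} the ground-state limits $\beta,\beta_z\to\infty$ and $p\to\infty$ under the $\c_k\to\c_k/p$ rescaling — i.e., that these limits and the $\max_{\nu}/\min_{\gamma_{sq}}$ operations commute with $\lim_{n\to\infty}$ and with the nested expectations — together with the accompanying concentration/decoupling arguments that replace the $n$- and $m$-dimensional optimizations by their i.i.d. single-coordinate surrogates. Once those interchanges are granted, the algebraic steps (\ref{eq:prac4})--(\ref{eq:prac10}) are routine, so the proof reduces to the bookkeeping of Section~\ref{sec:prac} restated compactly.
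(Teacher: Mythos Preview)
Your proposal is correct and follows essentially the same route as the paper: the paper's proof simply says ``Follows immediately from the previous discussion and Theorem~\ref{thm:thmsflrdt1},'' where the previous discussion is precisely the chain (\ref{eq:prac1})--(\ref{eq:negprac18}) that you have restated---specializing Theorem~\ref{thm:thmsflrdt1} to $\cX=\bar{\cX}=\mB^n$, $\cY=\mS^m$, the ABP $f_{\bar{\x}}$, taking $s=-1$ and the ground-state/large-$p$ limits with $\c_k\to\c_k/p$, splitting $\psi_{S,\infty}$ into the binary and spherical pieces, reducing each to a single-coordinate $\varphi$ via separability and the square-root trick, and reading off $\bar{\psi}_{rd}$ at the stationary point. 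Your flagged ``delicate point'' about commuting the limits is exactly what the paper defers to the complete sfl LD RDT setup of \cite{Stojnicsflldp25}.
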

\begin{proof}
Follows immediately from the previous discussion and Theorem \ref{thm:thmsflrdt1}.
\end{proof}

\subsection{Numerical evaluations}
\label{sec:nuemrical}

To have the results of Theorem \ref{thme:negthmprac1} in full power one needs to conduct underlying numerical evaluations. A particular value of $\kappa$  is also needed to obtain concrete practically useful values. We take $\kappa=0$ which corresponds to the most famous, so-called zero-threshold, scenario. Also, as mentioned earlier, we immediately present the results for the second level of lifting since that is the lowest level where the key LE features -- particularly relevant for BP algorithmic hardness and presumable existence of the associated computational gap -- appear.

\subsubsection{$r=2$ -- second level of lifting}
\label{sec:secondlev}

We have, $r=2$,  $\hat{\p}_0 = \hat{\q}_0=1$, $\hat{\p}_{r+1}=\hat{\p}_{3}=\hat{\q}_{r+1}=\hat{\q}_{3}=0$, and $\c_1\rightarrow 1$ but in general  $\p_1\neq0$, $\p_2\neq0$,  $\q_1\neq0$, $\q_2\neq0$, and $\hat{\c}_{2}\neq 0$. Then we can write
\begin{eqnarray}\label{eq:negprac24}
    \bar{\psi}_{rd}(\p,\q,\c,\nu,\gamma_{sq},-1)
    \hspace{-.05in}
      & = &  \frac{1}{2}
\beta^2(1-\p_1\q_1)\c_1
+ \frac{1}{2}\beta^2(\p_1\q_1-\p_2\q_2)\c_2
\nonumber \\
& &
+\nu  - \frac{1}{\c_2}\mE_{{\mathcal U}_3}\log\lp \mE_{{\mathcal U}_2}
  \lp
  \max_{\bar{\x}_1=\pm 1 }
\lp \mE_{{\mathcal U}_1} e^{\beta |\sqrt{1-\q_1}\h_1^{(1)} +\sqrt{\q_1-\q_2}\h_1^{(2)}+\sqrt{\q_2}\h_1^{(3)}  +\nu\bar{\x}_1   |} \rp  \rp^{\c_2}
\rp \nonumber \\
& &   + \gamma_{sq}
 -\alpha\frac{1}{\c_2}\mE_{{\mathcal U}_3} \log\lp \mE_{{\mathcal U}_2}\lp \mE_{{\mathcal U}_1} e^{-\beta\frac{\max(\kappa+ \sqrt{1-\p_1}\u_1^{(2,1)}+\sqrt{\p_1-\p_2}\u_1^{(2,2) } +\sqrt{\p_2}\u_1^{(2,3)} ,0)^2} {4\gamma_{sq}}}\rp^{\c_2}\rp.
 \nonumber \\
    \end{eqnarray}
One then first finds
\begin{eqnarray}\label{eq:negprac24a0}
f_{(z)}^{(2)} & = & \mE_{{\mathcal U}_1} e^{\beta|\sqrt{1-\q_1}\h_1^{(1)} +\sqrt{\q_1-\q_2}\h_1^{(2)}+\sqrt{\q_2}\h_1^{(3)} +\nu\bar{\x}_1   |}
  \nonumber \\
 & = &  \frac{1}{2}
 e^{\frac{(1-\q_1)\beta^2}{2}}
 \nonumber  \\
 & & \times
 \Bigg(\Bigg.
 e^{-\beta \lp  \sqrt{\q_1-\q_2}\h_1^{(2)} + \sqrt{\q_2}\h_1^{(3)}  +\nu\bar{\x}_1 \rp }
 \erfc\lp - \lp\beta\sqrt{1-\q_1}-\frac{  \sqrt{\q_1-\q_2}\h_1^{(2)} +\sqrt{\q_2}\h_1^{(3)}  +\nu\bar{\x}_1   }  {\sqrt{1-\q_1}}\rp\frac{1}{\sqrt{2}}\rp \nonumber \\
& &  + e^{\beta \lp  \sqrt{\q_1-\q_2}\h_1^{(2)} + \sqrt{\q_2}\h_1^{(3)}  +\nu\bar{\x}_1 \rp }
   \erfc  \Bigg.\Bigg(  -  \Bigg.\Bigg( \beta\sqrt{1-\q_1}+\frac{  \sqrt{\q_1-\q_2}\h_1^{(2)} + \sqrt{\q_2}\h_1^{(3)}  +\nu\bar{\x}_1   }   {\sqrt{1-\q_1}} \Bigg.\Bigg) \frac{1}{\sqrt{2}}
    \Bigg.\Bigg)
   \Bigg.\Bigg),\nonumber \\
     \end{eqnarray}
 and
\begin{equation}\label{eq:negprac24a1}
\mE_{{\mathcal U}_3}\log\lp \mE_{{\mathcal U}_2}
  \lp
  \max_{\bar{\x}_1=\pm 1 }
\lp \mE_{{\mathcal U}_1} e^{\beta |\sqrt{1-\q_1}\h_1^{(1)} +\sqrt{\q_1-\q_2}\h_1^{(2)}+\sqrt{\q_2}\h_1^{(3)}  +\nu\bar{\x}_1   |} \rp \rp^{\c_2}
\rp =  \mE_{{\mathcal U}_3}\log   \lp  \mE_{{\mathcal U}_2} \lp   \max_{\bar{\x}_1=\pm 1 }
 f_{(z)}^{(2)} \rp^{\c_2}  \rp.
    \end{equation}
Also, after setting
\begin{eqnarray}\label{eq:negprac24a2}
\bar{h} & = &  -\frac{\sqrt{\p_1-\p_2}\u_1^{(2,2)}+\sqrt{\p_2}\u_1^{(2,3)}+\kappa}{\sqrt{1-\p_1}}    \nonumber \\
\bar{B} & = & \frac{\c_2}{4\gamma_{sq}} 
\nonumber \\
\bar{C} & = & \sqrt{\p_1-\p_2}\u_1^{(2,2)}+\sqrt{\p_2}\u_1^{(2,3)} + \kappa \nonumber \\
f_{(zd)}^{(2,f)}& = & \frac{e^{-\frac{\bar{B}\bar{C}^2}{2(1-\p_1)\bar{B} + 1}}}{2\sqrt{2(1-\p_1)\bar{B} + 1}}
\erfc\lp\frac{\bar{h}}{\sqrt{4(1-\p_1)\bar{B} + 2}}\rp
\nonumber \\
f_{(zu)}^{(2,f)}& = & \frac{1}{2}\erfc\lp-\frac{\bar{h}}{\sqrt{2}}\rp,  
   \end{eqnarray}
we find
\begin{eqnarray}\label{eq:negprac24a3}
\mE_{{\mathcal U}_3} \log\lp \mE_{{\mathcal U}_2}\lp \mE_{{\mathcal U}_1} e^{-\beta\frac{\max(\sqrt{1-\p_1}\u_1^{(2,1)}+\sqrt{\p_1-\p_2}\u_1^{(2,2) } +\sqrt{\p_2}\u_1^{(2,3)} ,0)^2} {4\gamma_{sq}}}\rp^{\c_2}\rp
=   \mE_{{\mathcal U}_3} \log   \lp  \mE_{{\mathcal U}_2} \lp  f_{(zd)}^{(2,f)}+f_{(zu)}^{(2,f)}\rp^{\c_2} \rp.
    \end{eqnarray}
Differentiation/optimization give scaling
$\q_1\beta^2\rightarrow \q_1^{(s)}$, $\q_2\beta^2\rightarrow \q_2^{(s)}$, and $\gamma_{sq}\rightarrow 0$, and
\begin{eqnarray}\label{eq:negprac24a4}
f_{(z)}^{(2)}
 & \rightarrow &
 e^{\frac{\beta^2-\q_1^{(s)}}{2}}
 \Bigg(\Bigg.
 e^{-  \lp  \sqrt{\q_1^{(s)}-\q_2^{(s)}}\h_1^{(3)}  + \sqrt{\q_2^{(s)}}\h_1^{(3)}   + \nu\bar{\x}_1  \rp }
    + e^{   \lp  \sqrt{\q_1^{(s)}-\q_2^{(s)}}\h_1^{(3)}  + \sqrt{\q_2^{(s)}}\h_1^{(3)}   + \nu\bar{\x}_1  \rp  }
    \Bigg.\Bigg)
    \nonumber \\
 & \rightarrow &
 e^{\frac{\beta^2-\q_1^{(s)}}{2}}
2 \cosh \lp  \sqrt{\q_1^{(s)}-\q_2^{(s)}}\h_1^{(3)}  + \sqrt{\q_2^{(s)}}\h_1^{(3)}   + \nu\bar{\x}_1  \rp  .
     \end{eqnarray}
After observing
\begin{eqnarray}\label{eq:negprac24a5}
\bar{h} & = &  -\frac{\sqrt{\p_1-\p_2}\u_1^{(2,2)}+\sqrt{\p_2}\u_1^{(2,3)}+\kappa}{\sqrt{1-\p_1}}    \nonumber \\
\bar{B} & = & \frac{\c_2}{4\gamma_{sq}} \rightarrow \infty
\nonumber \\
\bar{C} & = & \sqrt{\p_1-\p_2}\u_1^{(2,2)}+\sqrt{\p_2}\u_1^{(2,3)} + \kappa \nonumber \\
f_{(zd)}^{(2,f)}& = & \frac{e^{-\frac{\bar{B}\bar{C}^2}{2(1-\p_1)\bar{B} + 1}}}{2\sqrt{2(1-\p_1)\bar{B} + 1}}
\erfc\lp\frac{\bar{h}}{\sqrt{4(1-\p_1)\bar{B} + 2}}\rp
\rightarrow 0
\nonumber \\
 f_{(zu)}^{(2,f)}& = & \frac{1}{2}\erfc\lp-\frac{\bar{h}}{\sqrt{2}}\rp
\rightarrow \frac{1}{2}\erfc\lp \frac{  \sqrt{\p_1-\p_2}\u_1^{(2,2)}+\sqrt{\p_2}\u_1^{(2,3)}+\kappa  }{\sqrt{2}\sqrt{1-\p_1}} \rp.
   \end{eqnarray}
and combining (\ref{eq:negprac24}), (\ref{eq:negprac24a1}), (\ref{eq:negprac24a3}), (\ref{eq:negprac24a4}), and (\ref{eq:negprac24a5}), we find
\begin{eqnarray}\label{eq:negprac24a6}
-f_{sq}^{(2,f)}(\infty) & = &     \bar{\psi}_{rd}(\p,\q,\c,\gamma_{sq},-1) \nonumber \\
  & = &  \frac{1}{2}
(\beta^2-\p_1\q_1^{(s)})
+ \frac{1}{2}(\p_1\q_1^{(s)}-\p_2\q_2^{(s)})\c_2
- \frac{\beta^2-\q_1^{(s)}}{2}
\nonumber \\
& & +\nu  - \frac{1}{\c_2}
 \mE_{{\mathcal U}_3}\log   \lp  \mE_{{\mathcal U}_3} \lp   \max_{\bar{\x}_1=\pm 1 }
 f_{(z)}^{(2)} \rp^{\c_2}  \rp
  + \gamma_{sq}
 -\alpha\frac{1}{\c_2}
  \mE_{{\mathcal U}_3} \log   \lp  \mE_{{\mathcal U}_3} \lp  f_{(zd)}^{(2,f)}+f_{(zu)}^{(2,f)}\rp^{\c_2} \rp
   \nonumber\\
  & \rightarrow &
\frac{(1-\p_1)\q_1^{(s)}}{2}
+ \frac{1}{2}(\p_1\q_1^{(s)}-\p_2\q_2^{(s)})\c_2
\nonumber \\
& &
+\nu
-\frac{1}{\c_2}
 \mE_{{\mathcal U}_3}\log   \lp  \mE_{{\mathcal U}_2} \lp   \max_{\bar{\x}_1=\pm 1 }
 2 \cosh \lp  \sqrt{\q_1^{(s)}-\q_2^{(s)}}\h_1^{(2)}  + \sqrt{\q_2^{(s)}}\h_1^{(3)}   + \nu\bar{\x}_1  \rp
  \rp^{\c_2}  \rp
   \nonumber \\
& &
  + \gamma_{sq}
 -\alpha\frac{1}{\c_2}
  \mE_{{\mathcal U}_3} \log   \lp  \mE_{{\mathcal U}_2} \lp  \frac{1}{2}\erfc\lp \frac{  \sqrt{\p_1-\p_2}\u_1^{(2,2)}+\sqrt{\p_2}\u_1^{(2,3)}+\kappa  }{\sqrt{2}\sqrt{1-\p_1}} \rp     \rp^{\c_2} \rp
     \nonumber\\
  & \rightarrow &
\frac{(1-\p_1)\q_1^{(s)}}{2}
+ \frac{1}{2}(\p_1\q_1^{(s)}-\p_2\q_2^{(s)})\c_2
\nonumber \\
& &
+\nu
-\frac{1}{\c_2}
 \mE_{{\mathcal U}_3}\log   \lp  \mE_{{\mathcal U}_2} \lp
 2 \cosh \lp \left | \sqrt{\q_1^{(s)}-\q_2^{(s)}}\h_1^{(2)}  + \sqrt{\q_2^{(s)}}\h_1^{(3)} \right |  + |\nu|  \rp
  \rp^{\c_2}  \rp
   \nonumber \\
& &
  + \gamma_{sq}
 -\alpha\frac{1}{\c_2}
  \mE_{{\mathcal U}_3} \log   \lp  \mE_{{\mathcal U}_2} \lp  \frac{1}{2}\erfc\lp \frac{  \sqrt{\p_1-\p_2}\u_1^{(2,2)}+\sqrt{\p_2}\u_1^{(2,3)}+\kappa  }{\sqrt{2}\sqrt{1-\p_1}} \rp     \rp^{\c_2} \rp. \nonumber\\
    \end{eqnarray}

\noindent \underline{\textbf{\emph{$\p$ - derivatives}}}

\noindent For $\p_1$ we first note
\begin{eqnarray}\label{eq:pder1}
\frac{d\bar{h} } {d\p_1}   & = &  -  \frac{ 1/2/\sqrt{\p_1-\p_2} \u_1^{(2,2)}   } {\sqrt{1-\p_1} }  - 1/2\frac{\sqrt{\p_1-\p_2} \u_1^{(2,2 )}  + \sqrt{\p_2} \u_1^{(2,3 )} + \kappa  } { \sqrt{1-\p_1}^3 }   \nonumber \\
 \frac{d f_{(zu)}^{(2,f)} } {d\p_1} & = &
   - 1/2 \lp -1/\sqrt{2} \frac{d\bar{h} } {d\p_1}  \rp  \lp\frac{2}{\sqrt{\pi}} e^{-(\bar{h}/\sqrt{2}  )^2} \rp.
   \end{eqnarray}
We then obtain
\begin{eqnarray}\label{eq:pder2}
  \frac{d  \bar{\psi}_{rd}(\p,\q,\c,\nu,\gamma_{sq},-1)}{d\p_1}
 &\rightarrow &
\frac{(\c_2-1)}{2}\q_1^{(s)}
 -\alpha\frac{1}{\c_2}
  \mE_{{\mathcal U}_3} \frac{ \mE_{{\mathcal U}_2} \lp \c_2 \lp f_{(zu)}^{(2,f)} \rp^{\c_2-1}   \frac{d f_{(zu)}^{(2,f)} } {d\p_1}   \rp   }
    { \mE_{{\mathcal U}_2} \lp  f_{(zu)}^{(2,f)}     \rp^{\c_2}  }.
    \end{eqnarray}
For $\p_2$ we further note
\begin{eqnarray}\label{eq:pder3}
\frac{d\bar{h} } {d\p_2}   & = &    \frac{ - 1/2/\sqrt{\p_1-\p_2} \u_1^{(2,2 )}  + 1/2/\sqrt{\p_2} \u_1^{(2,3 )}  } { \sqrt{1-\p_1} }   \nonumber \\
 \frac{d f_{(zu)}^{(2,f)} } {d\p_2} & = &
   - 1/2 \lp -1/\sqrt{2} \frac{d\bar{h} } {d\p_2}  \rp  \lp\frac{2}{\sqrt{\pi}} e^{-(\bar{h}/\sqrt{2}  )^2} \rp,
   \end{eqnarray}
and obtain
\begin{eqnarray}\label{eq:pder4}
  \frac{d  \bar{\psi}_{rd}(\p,\q,\c,\nu,\gamma_{sq},-1)}{d\p_2}
 &\rightarrow &
-\frac{\c_2}{2}\q_1^{(s)}
 -\alpha\frac{1}{\c_2}
  \mE_{{\mathcal U}_3} \frac{ \mE_{{\mathcal U}_2}  \lp \c_2 \lp f_{(zu)}^{(2,f)} \rp^{\c_2-1}   \frac{d f_{(zu)}^{(2,f)} } {d\p_2}   \rp   }
    {  \mE_{{\mathcal U}_2} \lp  f_{(zu)}^{(2,f)}     \rp^{\c_2}  }.
    \end{eqnarray}

\noindent \underline{\textbf{\emph{$\q$ - derivatives}}}

\noindent After setting
\begin{eqnarray}\label{eq:pder5}
f_{(zc)}^{(2)} =  2 \cosh \lp \left | \sqrt{\q_1^{(s)}-\q_2^{(s)}}\h_1^{(2)}  + \sqrt{\q_2^{(s)}}\h_1^{(3)} \right |  + |\nu|  \rp,
    \end{eqnarray}
 we have (modulo continuity)
\begin{eqnarray}\label{eq:pder6}
 \frac{d f_{(zc)}^{(2)} } {d\q_1^{(s)} } & = &
 \mbox{sign}    \lp  \sqrt{\q_1^{(s)} -\q_2^{(s)} }\h_1^{(2)}  + \sqrt{\q_2^{(s)} } \h_1^{(2)}  \rp
 \lp 1/2/\sqrt{\q_1^{(s)} -\q_2^{(s)} }\h_1^{(2)}    \rp
 \nonumber \\
 & & \times
 2 \sinh \lp \left | \sqrt{\q_1^{(s)} -\q_2^{(s)} }\h_1^{(2)} + \sqrt{\q_2^{(s)} } \h_1^{(3)} \right |+ |\nu|  \rp,
\end{eqnarray}
and
\begin{eqnarray}\label{eq:pder7}
  \frac{d  \bar{\psi}_{rd}(\p,\q,\c,\nu,\gamma_{sq},-1)}{d\q_1^{(s)}  }
 &\rightarrow &
\frac{(1-\p_1)}{2}
+\frac{\p_1}{2}\c_2
 - \frac{1}{\c_2}
  \mE_{{\mathcal U}_3} \frac{ \mE_{{\mathcal U}_2} \lp  \c_2 \lp f_{(zc)}^{(2)} \rp^{\c_2-1}   \frac{d f_{(zc)}^{(2)} } {d\q_1^{(s)}}    \rp  }
    {  \mE_{{\mathcal U}_2} \lp  f_{(zc)}^{(2)}     \rp^{\c_2}  }.
    \end{eqnarray}
Analogously, we then have for $\q_2^{(s)}$
\begin{eqnarray}\label{eq:pder8}
 \frac{d f_{(zc)}^{(2)} } {d\q_2^{(s)} } & = &
 \mbox{sign}    \lp  \sqrt{\q_1^{(s)} -\q_2^{(s)} }\h_1^{(2)}  + \sqrt{\q_2^{(s)} } \h_1^{(2)}  \rp
 \lp -1/2/\sqrt{\q_1^{(s)} -\q_2^{(s)} }\h_1^{(2)} + 1/2/\sqrt{\q_2^{(s)} } \h_1^{(3)}  \rp
 \nonumber \\
 & & \times
 2 \sinh \lp \left | \sqrt{\q_1^{(s)} -\q_2^{(s)} }\h_1^{(2)} + \sqrt{\q_2^{(s)} } \h_1^{(3)} \right |+ |\nu|  \rp,
\end{eqnarray}
and
\begin{eqnarray}\label{eq:pder9}
  \frac{d  \bar{\psi}_{rd}(\p,\q,\c,\nu,\gamma_{sq},-1)} {d\q_1^{(s)}  }
 &\rightarrow &
 -\frac{\p_2}{2}\c_2
 - \frac{1}{\c_2}
  \mE_{ {\mathcal U}_3 } \frac{ \mE_{{\mathcal U}_2} \lp  \c_2 \lp f_{(zc)}^{(2)} \rp^{\c_2-1}   \frac{d f_{(zc)}^{(2)} } {d\q_2^{(s)}  }   \rp  }
    {  \mE_{{\mathcal U}_2} \lp  f_{(zc)}^{(2)}     \rp^{\c_2}  }.
    \end{eqnarray}

\noindent \underline{\textbf{\emph{$\c_2$ and $\nu$ - derivatives}}}

\noindent For $\c_2$ we have

\begin{eqnarray}\label{eq:pder10}
  \frac{d  \bar{\psi}_{rd}(\p,\q,\c,\nu,\gamma_{sq},-1)}{d\c_2 }
  & \rightarrow &
  \frac{d}{d\c_2}
   \Bigg .\Bigg (
\frac{(1-\p_1)\q_1^{(s)}}{2}
+ \frac{1}{2}(\p_1\q_1^{(s)}-\p_2\q_2^{(s)})\c_2
\nonumber \\
& &
+\nu
-\frac{1}{\c_2}
 \mE_{{\mathcal U}_3}\log   \lp  \mE_{{\mathcal U}_2} \lp
f_{(zc)}^{(2)}
  \rp^{\c_2}  \rp
   + \gamma_{sq}
 -\alpha\frac{1}{\c_2}
  \mE_{{\mathcal U}_3} \log   \lp  \mE_{{\mathcal U}_2} \lp
  f_{(zu)}^{(2,f)}
      \rp^{\c_2} \rp \Bigg .\Bigg ). \nonumber\\
 & \rightarrow &
   \frac{1}{2}(\p_1\q_1^{(s)}-\p_2\q_2^{(s)})
\nonumber \\
& &
 +\frac{1}{\c_2^2}
 \mE_{{\mathcal U}_3}\log   \lp  \mE_{{\mathcal U}_2} \lp
f_{(zc)}^{(2)}
  \rp^{\c_2}  \rp
- \frac{1}{\c_2}
  \mE_{{\mathcal U}_3} \frac{ \mE_{{\mathcal U}_2} \lp \log  \lp f_{(zc)}^{(2)} \rp  \lp f_{(zc)}^{(2)} \rp^{\c_2}  \rp  }
    {  \mE_{{\mathcal U}_2} \lp  f_{(zc)}^{(2)}     \rp^{\c_2}  }
 \nonumber \\
 & &
 +\frac{\alpha}{\c_2^2}
 \mE_{{\mathcal U}_3}\log   \lp  \mE_{{\mathcal U}_2} \lp
f_{(zu)}^{(2,f)}
  \rp^{\c_2}  \rp
- \frac{\alpha}{\c_2}
  \mE_{{\mathcal U}_3} \frac{ \mE_{{\mathcal U}_2} \lp \log  \lp f_{(zu)}^{(2,f)} \rp  \lp f_{(zu)}^{(2,f)} \rp^{\c_2}  \rp  }
    {  \mE_{{\mathcal U}_2} \lp  f_{(zu)}^{(2,f)}     \rp^{\c_2}  }. \nonumber\\
\end{eqnarray}
For $\nu$ we first observe (modulo continuity)
\begin{eqnarray}\label{eq:pder11}
 \frac{d f_{(zc)}^{(2)} } {d\nu } & = &
 \mbox{sign}    \lp  \nu \rp
  2 \sinh \lp \left | \sqrt{\q_1^{(s)} -\q_2^{(s)} }\h_1^{(2)} + \sqrt{\q_2^{(s)} } \h_1^{(3)} \right |+ |\nu|  \rp,
\end{eqnarray}
and then
\begin{eqnarray}\label{eq:pder12}
  \frac{d  \bar{\psi}_{rd}(\p,\q,\c,\nu,\gamma_{sq},-1)} {d\nu  }
 &\rightarrow &
 \bar{\delta}
  - \frac{1}{\c_2}
  \mE_{ {\mathcal U}_3 } \frac{ \mE_{{\mathcal U}_2} \lp  \c_2 \lp f_{(zc)}^{(2)} \rp^{\c_2-1}   \frac{d f_{(zc)}^{(2)} } {d\nu }   \rp  }
    {  \mE_{{\mathcal U}_2} \lp  f_{(zc)}^{(2)}     \rp^{\c_2}  }.
    \end{eqnarray}
A combination of  (\ref{eq:pder2}), (\ref{eq:pder4}), (\ref{eq:pder7}), (\ref{eq:pder9}), (\ref{eq:pder10}), and (\ref{eq:pder12}) is sufficient to determine $\hat{\p}$, $\hat{\q}$, $\hat{\c}$, and $\hat{\nu}$. Together with $\hat{\gamma}_{sq}\rightarrow 0$ this then suffices to determine $\bar{\psi}_{rd}(\hat{\p},\hat{\q},\hat{\c},\hat{\nu},\hat{\gamma}_{sq},-1)$ as well. The obtained results for $\alpha=0.77$ and $\alpha=0.78$ are shown in Figures \ref{fig:fig0} and \ref{fig:fig1}. In Figure  \ref{fig:fig0} we also show the maximal LE value, $S_{max}$, as a useful upper bound and an indicator how large the considered clusters are compared to their maximal size.
\begin{figure}[h]
\centering
\centerline{\includegraphics[width=1.00\linewidth]{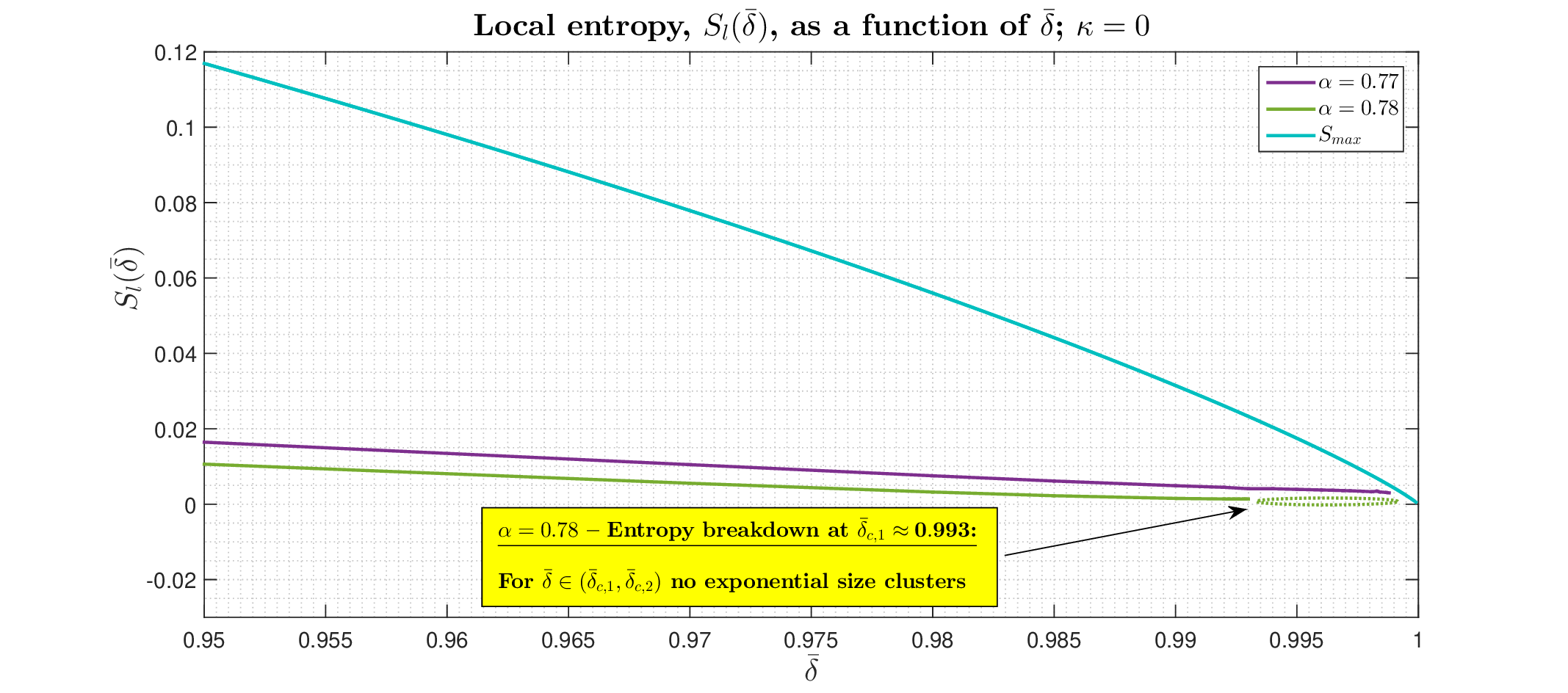}}
\caption{Local entropy breakdown}
\label{fig:fig0}
\end{figure}
In  Table \ref{tab:tab1},  the results from Figures \ref{fig:fig0} and \ref{fig:fig1}  are complemented with concrete values of all relevant parameters for a particular overlap value, $\bar{\delta}=0.99$ (we should also add that we repeated all of the above calculations relying on modulo-$\m$ concepts from  \cite{Stojnicnflldp25} and \cite{Stojnicsflldp25} and obtained exactly the same results as those in Figures \ref{fig:fig0} and  \ref{fig:fig1} and Table \ref{tab:tab1} which implies \emph{maximization} type of  $\c$ \emph{stationarity} and a remarkable agreement with the same type of discovery discussed in \cite{Stojnicbinperflrdt23,Stojnichopflrdt23,Stojnicnegsphflrdt23}).

\begin{table}[t]
\caption{$r$-sfl RDT parameters; ABP local entropy;  $\hat{\c}_1\rightarrow 1$; $\kappa=0$; $n,p,\beta\rightarrow\infty$}\vspace{.1in}
\centering
\def\arraystretch{1.2}
{\small \begin{tabular}{||l||c||c||c||c||c|c||c|c||c||c||}\hline\hline
 \hspace{-0in}$r$-sfl RDT                                             & $\alpha$   & $\bar{\delta}$   & $\hat{\gamma}_{sq}$        & $\hat{\nu}$    &  $\hat{\p}_2$ & $\hat{\p}_1$     & $\hat{\q}_2^{(s)}\rightarrow \hat{\q}_2\beta^2$  & $\hat{\q}_1^{(s)}\rightarrow \hat{\q}_1\beta^2$ &  $\hat{\c}_2$    & $S_l(\bar{\delta})$  \\ \hline\hline
   $2$-sfl RDT                                      & $0.77$            & $0.99$            & $0$    & $0.2258$   & $0.6539$ & $0.9814$ &  $0.2087$ & $0.7924$
 &  $4.86$   & \bl{$\mathbf{0.0049}$}  \\ \hline\hline
   $2$-sfl RDT                                      & $0.78$            & $0.99$            & $0$    & $0.0983$   & $0.6536$ & $0.9818$ &  $0.2529$ & $0.9544$
 &  $4.40$   & \bl{$\mathbf{0.0015}$}  \\ \hline\hline
  \end{tabular}}
\label{tab:tab1}
\end{table}

The most distinct feature concretely related to Figure \ref{fig:fig0} is the appearance of a local entropy breakdown  for $\alpha=0.78$. As  Figure \ref{fig:fig1} further shows, for overlap values $\bar{\delta}\geq \bar{\delta}_{c,1}\approx 0.993$, there is no $\nu\neq 0$ that would allow to enforce the overlap and existence of an exponentially large number of solutions at distance $d=\frac{1-\bar{\delta}}{2}$ from any other point. This immediately implies that there are no such clusters around any solutions either (the local entropy concept studied here is the worst case variant and as such is able to exclude existence of \emph{any} exponential clusters). This entropy breakdown trend seems to reverses only when  $\bar{\delta}\geq \bar{\delta}_{c,2}\approx 1$  (invisible in figures due to numerical instabilities of small values and being very close to 1).

It should also be noted from Figure \ref{fig:fig1} that $\nu>0$ remains in place for any $\bar{\delta}$ at $\alpha=0.77$ implying the absence of entropy breakdown as indicated in Figure \ref{fig:fig0}. This is in a perfect agreement with \cite{Bald16} where statistical physics replica methods were used to predicate that $(0.77,0.78)$ $\alpha$ window plays a key role in characterization of the worst case local entropy (it is also in a remarkable agreement with the fact that known practically feasible ABP algorithms typically have a breakdown precisely around $\alpha\sim 0.75-0.77$). Moreover, the results shown in Figures \ref{fig:fig0} and \ref{fig:fig1} seem incredibly similar to the ones in Figure 1 in \cite{Bald16}. Even more remarkable is that when $\delta q$ and $\delta \hat{q}$ in \cite{Bald16}'s (B37)-(B49) are close to zero, the curves are not only visually similar but also analytically completely identical.

One should keep in mind that all the evaluations discussed above are done on the second level of lifting. While results are likely to experience tiny refinements on higher levels we do not expect major qualitative changes.

\begin{figure}[h]
\centering
\centerline{\includegraphics[width=1.00\linewidth]{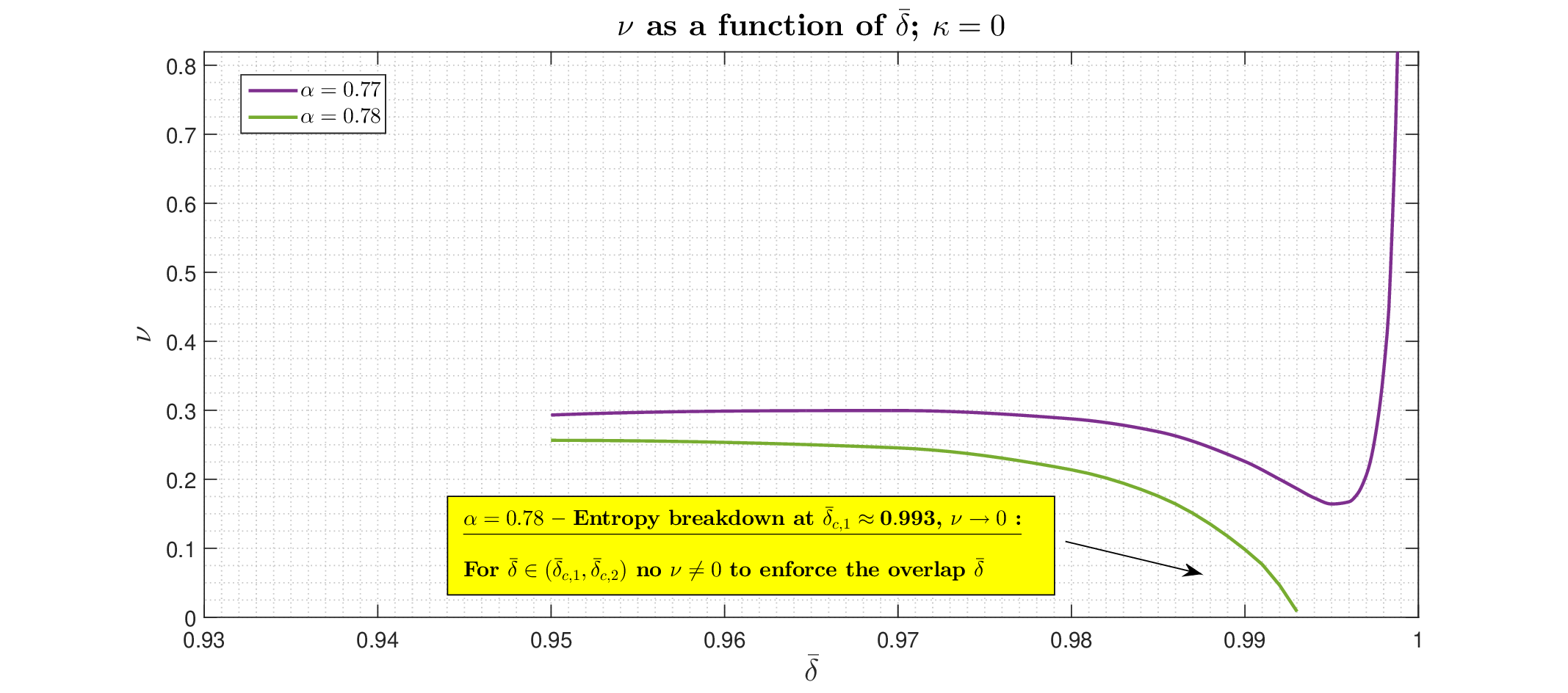}}
\caption{$\nu$ as a function of $\bar{\delta}$}
\label{fig:fig1}
\end{figure}

\section{Conclusion}
\label{sec:conc}

In a search for answers that would explain the ABP's algorithmic hardness and presumable existence of computational gaps  understanding solutions clusterings has been posited as of key importance \cite{Bald15,Bald16,AbbLiSly21a,Gamar21,GamKizPerXu22}. Two problem intrinsic features, overlap gap property (OGP) and local entropy (LE), as driving forces behind specific clustering organizations, are predicated to play essential role in studying computational gaps. Even when \emph{typical} problem solutions are isolated thereby disallowing the presence of large number of well connected clusters, there exist \emph{atypical} exponentially large rare clusters that efficient algorithms magically discover. \cite{Bald15,Bald16} go further and predict that LE's monotonicity or complete breakdown are directly related to the appearance of such atypicalities.

Invention of fully lifted random duality theory (fl RDT) \cite{Stojnicnflgscompyx23,Stojnicsflgscompyx23,Stojnicflrdt23} allowed studying random structures \emph{typical} features (for example, ABP's capacities and typical solutions entropies can easily be handled). A large deviation upgrade (sfl LD RDT) introduced in companion papers  \cite{Stojnicnflldp25,Stojnicsflldp25} allows one to move further and study the \emph{atypical} features as well. The programs developed in \cite{Stojnicnflldp25,Stojnicsflldp25} are utilized here to study ABP's LE as an example of such atypical feature. We establish a powerful generic framework and discover that already on the second level of full lifting the results are closely matching those obtained through replica methods. In particular, we obtain that LE breaks down for $\alpha$ in $(0.77,0.78)$ interval. This is both not that far away from ABP's capacity $\alpha_c=0.833$ and very close to the limits of current practical algorithms ($\alpha\sim 0.75-0.77$)  which indicates that LE might indeed be associated with the existence of ABP's computational gaps.

The introduced methodology is a generic concept and allows for various extensions beyond ABP. Many of them are related to random feasibility problems such as symmetric binary perceptrons (SBP)  positive/negative  spherical perceptrons, multi-layer (deep) neural nets, and \emph{strong/sectional} compressed sensing $\ell_1$ phase transitions. Many also relate to generic random structures such as  discrepancy minimization problems and those discussed in \cite{Stojnicnflgscompyx23,Stojnicsflgscompyx23,Stojnicflrdt23,Stojnichopflrdt23}. Obtaining concrete results for these extensions typically requires a bit of additional technical adjustment that is problem specific and will be discussed in separate papers.

\begin{singlespace}
\bibliographystyle{plain}
\bibliography{nflgscompyxRefs}
\end{singlespace}

\end{document}